\newtheorem{definition}{\textbf{Definition}}
\newtheorem{theorem}{\textbf{Theorem}}
\newtheorem{lemma}{\textbf{Lemma}}
\newtheorem{remark}{\textbf{Remark}}
\def\A{\mathcal{A}}
\begin{document}
\begin{CJK}{UTF8}{gbsn}
\title{Efficient Over-parameterized Matrix Sensing from Noisy Measurements via Alternating Preconditioned Gradient Descent}

\author{Zhiyu Liu, Zhi Han, Yandong Tang, Shaojie Tang, Yao Wang
\thanks{Zhiyu Liu is with the State Key
Laboratory of Robotics, Shenyang Institute of Automation, Chinese Academy
of Sciences, Shenyang 110016, P.R. China, and also with the University of Chinese Academy
of Sciences, Beijing 100049, China (email: liuzhiyu@sia.cn).}
\thanks{Zhi Han, Yandong Tang are with the State Key
Laboratory of Robotics, Shenyang Institute of Automation, Chinese Academy
of Sciences, Shenyang 110016, P.R. China (email: hanzhi@sia.cn; ytang@sia.cn).}
\thanks{Shaojie Tang is with Department of Management Science and Systems, State University of New York at Buffalo (e-mail: shaojiet@buffalo.edu).}
\thanks{Yao Wang is with the Center for Intelligent Decision-making and Machine
Learning, School of Management, Xi’an Jiaotong University, Xi’an 710049,
P.R. China. (email: yao.s.wang@gmail.com).}
}



\maketitle

\begin{abstract}
We consider the noisy matrix sensing problem in the over-parameterization setting, where the estimated rank $r$ is larger than the true rank $r_\star$ of the target matrix $X_\star$. Specifically, our main objective is to recover a matrix $ X_\star \in \mathbb{R}^{n_1 \times n_2} $ with rank $ r_\star $ from noisy measurements using an over-parameterized factorization $ LR^\top $, where $ L \in \mathbb{R}^{n_1 \times r}, \, R \in \mathbb{R}^{n_2 \times r} $ and $ \min\{n_1, n_2\} \ge r > r_\star $, with $ r_\star $ being unknown. Recently, preconditioning methods have been proposed to accelerate the convergence of matrix sensing problem compared to vanilla gradient descent, incorporating preconditioning terms $ (L^\top L + \lambda I)^{-1} $ and $ (R^\top R + \lambda I)^{-1} $ into the original gradient. However, these methods require careful tuning of the damping parameter $\lambda$ and are sensitive to step size. To address these limitations, we propose the alternating preconditioned gradient descent (APGD) algorithm, which alternately updates the two factor matrices, eliminating the need for the damping parameter $\lambda$ and enabling faster convergence with larger step sizes. We theoretically prove that APGD convergences to a near-optimal error at a linear rate. We further show that APGD can be extended to deal with other low-rank matrix estimation tasks, also with a theoretical guarantee of linear convergence. To validate the effectiveness and scalability of the proposed APGD, we conduct simulated and real-world experiments on a wide range of low-rank estimation problems, including noisy matrix sensing, weighted PCA, 1-bit matrix completion, and matrix completion. The extensive results demonstrate that APGD consistently achieves the fastest convergence and the lowest computation time compared to the existing alternatives.
\end{abstract}

\section{Introduction}

Low-rank matrix sensing is a fundamental problem encountered in various fields, including image processing \cite{candes2011robust,li2019cloud}, phase retrieval \cite{vaswani2017low,nayer2021sample}, quantum tomography \cite{rambach2021robust}, among others. The primary objective is to recover a rank-$r_\star$ matrix $ X_\star\in\mathbb{R}^{n_1\times n_2}(r_\star \ll \min\{n_1,n_2\})$ from noisy linear measurements $\{(y_i,A_i)\}^m_{i=1}$ of the form 
\begin{equation}
    y_i=\langle A_i, X_\star\rangle + s_i , i=1,...,m, \label{equ:1.1}
\end{equation}
where $\{s_i\}^{m}_{i=1}$ denotes the unknown noise, which we assume to be sub-Gaussian with a variance proxy $\nu^2$. This model can be concisely expressed as 
$\bm{y}=\mathcal{A}(X_\star)+\bm{s}$, where $\mathcal{A}(\cdot): \mathbb{R}^{n_1 \times n_2}\mapsto \mathbb{R}^m$ denotes the measurement operator. A prevalent method for recovering the low-rank matrix $X_\star\in\mathbb{R}^{n_1\times n_2}$ involves solving the following problem:
\begin{equation}
\underset{X\in\mathbb{R}^{n_1 \times n_2}}{\min}   ||\A(X)-\bm{y}||_2^2,\ \operatorname{s.t.} \operatorname{rank}(X)\le r . 
\notag \end{equation} 
However, such an optimization problem is NP-hard due to the rank constraint. To address this challenge, researchers have proposed relaxing the rank constraint to a convex nuclear norm constraint \cite{recht2010guaranteed,candes2011tight,candes2012exact,candes2010power}. Although this kind of relaxation provides a tractable solution, it requires computing the matrix SVD, resulting in a significant increase in computational cost as the matrix size grows. To mitigate this computational overhead, a common approach is to decompose the matrix $X$ into a factorized form $LR^{\top}$, where $L\in\mathbb{R}^{n_1\times r},\ R\in\mathbb{R}^{n_2\times r}$, also known as the Burer-Monteiro method \cite{burer2003nonlinear,burer2005local}, and then solve the following problem:
\begin{equation}
    \underset{L \in\mathbb{R}^{n_1\times r},\ R \in\mathbb{R}^{n_2\times r}}{\arg \min } f(L,R) = \frac{1}{2} \|\A(LR^\top)-\bm{y}\|_2^2.
\label{equ:3}
\end{equation}

This problem can be efficiently solved by the vanilla gradient descent (GD) method \cite{tu2016low,zhuo2024computational,bhojanapalli2016global,jin2023understanding}:
\begin{equation}
L_{t+1}=L_t-\eta\nabla_Lf(L_t,R_t),\ R_{t+1}=R_t-\eta\nabla_Rf(L_{t},R_t).
\notag
\end{equation}

Despite significant progress in the field of non-convex matrix sensing, t challenges remain for vanilla gradient descent:
\begin{itemize}

\item \textbf{Over-parameterization}
The Burer-Monteiro method requires estimating the rank of the target matrix $X_\star$. However, a significant challenge is that, in practice, accurately estimating the rank of the matrix to be recovered is difficult. Therefore, it is typically assumed that the estimated rank is slightly larger than the true rank, that is, a situation known as over-parameterization. Previous work has shown that even under over-parameterization, accurate recovery of the matrix is still possible. However, over-parameterization can severely degrade the convergence rate of gradient descent algorithms, resulting in sub-linear convergence \cite{zhang2021preconditioned,zhang2023preconditioned,zhuo2024computational}.
    
\item \textbf{Poor conditioning}
It is well known that gradient methods are susceptible to the condition number $\kappa$ of the target matrix, defined as the ratio of the largest to the smallest singular value. Previous studies \cite{zheng2015convergent,zhang2023preconditioned} have shown that the number of iterations for gradient methods increases at least linearly with the condition number. Unfortunately, most practical datasets exhibit very large condition numbers. For instance, \cite{cloninger2014solving} noted that certain applications of matrix sensing can have condition number as high as \( \kappa = 10^{15} \), which can severely impact the practical application of GD.
\end{itemize}

\subsection{Preconditioning accelerates gradient descent}

In recent years, considerable attention has been given to addressing the aforementioned issues, with one key approach being the acceleration of vanilla GD under over-parameterization and ill-conditioning through preconditioning techniques. Essentially, preconditioning methods enhance the original gradient by adding right preconditioners, similar to the approach used in quasi-Newton methods. However, unlike Newton's method, preconditioning methods avoid computing the inverse of the large Hessian matrix (which has dimensions \((n_1 + n_2)r \times (n_1 + n_2)r\)). Instead, they only need to compute the inverses of two \( r \times r \) matrices, thereby significantly reducing the computational overhead.

Tong et al. \cite{tong2021accelerating} proposed ScaledGD for solving the matrix recovery problem in the exact-parameterized case, as shown in Equation (\ref{equ:4}):  
\begin{equation}
\begin{aligned}
\operatorname{\textbf{ScaledGD}}\ \ &L_{t+1} = L_t -\eta \nabla_L f(L_t,R_t)\cdot (R_t^\top R_t)^{-1}\\
    &R_{t+1} = R_t -\eta \nabla_R f(L_t,R_t)\cdot (L_t^\top L_t)^{-1}.
\end{aligned}
\label{equ:4}
\end{equation}

However, ScaledGD diverges in the over-parameterized situation. Therefore, to handle the over-parameterized case, several methods have been proposed \footnote{
These works focus on the case where $X$ is symmetric and positive semidefinite, with the corresponding loss function given by $f(L) = \| \mathcal{A}(LL^\top) - y \|_2^2$.
}, including ScaledGD$(\lambda)$ \cite{xu2023power}, PrecGD \cite{zhang2021preconditioned,cheng2024accelerating}, and NoisyPrecGD\footnote{A variant of PrecGD designed for noisy situations. For convenience, we refer to it as NoisyPrecGD.} \cite{zhang2024fast} as shown in the following Equation (\ref{equ:5}):
\begin{equation}
\begin{aligned}
    &   \operatorname{\textbf{ScaledGD($\lambda$)}}\ \	 L_{t+1} = L_t -\eta \nabla_L f(L_t)\cdot (L_t^\top L_t + {\lambda} I)^{-1} \\
	&\operatorname{\textbf{PrecGD}}\ \ \ \ \ \ \ \  L_{t+1} = L_t -\eta \nabla_L f(L_t)\cdot (L_t^\top L_t + {\lambda_t} I)^{-1}. 
\label{equ:5}
\end{aligned}
\end{equation}
A common feature of these methods is the inclusion of an additional damping term $\lambda I$, and the use of symmetric positive semi-definite matrix \( X = LL^\top \). The difference lies in the selection of the damping parameter \( \lambda \). ScaledGD$(\lambda)$ requires \( \lambda \) to be a fixed, very small constant, while PrecGD requires \( \lambda \) to change dynamically, i.e., $\lambda_t = \Theta(\|L_tL_t^\top - X_\star\|_F)$. NoisyPrecGD \cite{zhang2024fast} points out that both of these methods fail in the presence of noise. To address this, they propose an exponential decay adjustment: \( \lambda_{\text{new}} = \beta \lambda \), where $0<\beta<1$.

However, these methods all require careful tuning of an appropriate \( \lambda \) to achieve optimal results. Additionally, they only consider symmetric positive semi-definite matrices, which is a simpler case. These limitations significantly hinder the practical applicability of the existing methods. This raises the following question: \textbf{Can we develop an algorithm that does not rely on the damping term, removes the symmetric positive semi-definite constraint, and still converges to near-optimal error at a linear rate?}

\begin{table}[ht]
\caption{Comparison of related works in over-parameterized noisy matrix sensing. In the second column, the upper bounds or exact setting of step size in the previous work are listed. The fourth column indicates whether the asymmetric factorization is considered. The fifth column refers to whether the preconditioning method requires the damping parameter \(\lambda\). According to \cite{zhang2024fast}, $\frac{1}{L_1}=\min\left\{ \frac{L_{\delta}}{60\sqrt{2}(1+\delta)+25(1+\delta)^2},\frac{1}{7L_{\delta}} 
 \right\}$, where \( \delta \) is the rank-$2r$ RIP constant and \( L_\delta \) is some constants. It is clear that $\frac{1}{L_1}$ is a relatively small number.}
\centering
\setlength{\tabcolsep}{4pt} 
\begin{tabular}{cccccc}
\hline
methods & step size  & convergence rate & asymmetry & damping term \\ \hline

\cite{ding2022validation} & $\le \frac{1}{c\kappa^2 \sigma_1(X_\star)}$   & sub-linear   & \XSolidBrush  & \textbackslash \\ \hline
\cite{zhuo2024computational} & $= \frac{1}{100\sigma_1(X_\star)}$   & sub-linear & \XSolidBrush & \textbackslash \\ \hline
\cite{zhang2024fast} &  $\le \frac{1}{L_1}$   & linear & \XSolidBrush & \Checkmark\\ \hline

ours &  $\le\frac{1}{1+\delta}$   & linear & \Checkmark  & \XSolidBrush \\ \hline

\end{tabular}
\label{table:1}
\end{table}

\subsection{Alternating helps: damping-free preconditioner}

To address the aforementioned question, we propose an alternating preconditioned gradient descent (APGD) algorithm to solve the over-parameterized matrix sensing problem. Many previous works have primarily considered the symmetric positive semi-definite case, which is often regarded as a simpler setting. Additionally, the favorable properties of symmetric positive semi-definite matrices can be leveraged to simplify the analysis.

However, we argue that asymmetric decomposition, compared to symmetric decomposition, offers the advantage of enabling more efficient and practical algorithm. This benefit arises from the alternating update. Specifically, after performing asymmetric decomposition on a given matrix, a natural approach is to alternately update the two matrices \cite{jain2013low,tanner2016low,lee2023randomly, gu2024low,ward2023convergence}. Notably, \cite{jia2024preconditioning} proved that alternating ScaledGD does not depend on a small step size, which has been a major inspiration for our work.
Inspired by \cite{jia2024preconditioning,tanner2016low}, we propose an APGD algorithm that combines alternating updates with preconditioning to solve the noisy asymmetric matrix sensing problem. We show that, after applying alternating update, the damping parameter in the preconditioner becomes unnecessary. It is worth noting that the APGD algorithm can also be applied to other low-rank matrix estimation problems, such as weighted PCA, matrix completion, and 1-bit matrix completion. As shown in the informal theorem, we not only provide convergence rate and error bounds for the noisy matrix sensing task, but also establish convergence guarantees for the APGD algorithm in the general case.
\begin{theorem}(Informal)
For the noisy over-parameterized matrix sensing problem, under some mild assumptions, starting from a initial point closed to the ground truth, APGD converges to the near-minimax error in a linear rate with high probability, i.e.,
$$
\|L_tR_t^\top-X_\star\|_F^2 \lesssim \max \left\{ Q_f ^{2t} \|L_0 R_0^\top - X_\star\|_F^2, \mathcal{E}_{opt} \right\},
$$
where $0<Q_f<1$, $\mathcal{E}_{opt}=C_e \frac{\nu^2rn\log n }{m}$, and $n = \max \{n_1,n_2\}$.

Moreover, for general low-rank matrix estimation problems, if the loss function \(g\) satisfies some mild conditions, APGD can achieve linear convergence when initialized sufficiently close to the ground truth, i.e.,  
$$
g(X_{t+1}) - g(X_\star) \le  Q_g  \, [g(X_t) - g(X_\star)],
$$
for some constant $0<Q_g<1$ which depends on the geometric properties of $g$ and step size.

\label{theorem:informal}
\end{theorem}

\begin{algorithm}[h]
\caption{Alternating Preconditioned Gradient Descent (APGD) for noisy matrix sensing}
\label{algorithm}
\textbf{Input:} Observation $\{y_i,\A_i\}_{i=1}^m$, step size $\eta$, estimated rank $r$.\\
{\textbf{Initialization}: Let $(\A^*\A(y))_r$ be the rank-$r$ approximation of $\A^*\A(y)$ and  $U_0S_0V_0^\top$ be the svd of $(\A^*\A(y))_r$. Then we set $L_0 = U_0S_0^{\frac{1}{2}},\ R_0 = V_0S_0^{\frac{1}{2}}$.}
\begin{algorithmic}[1] 
\STATE \textbf{for} $t=0$ to $T-1$ \textbf{do}
\STATE \ \ \ \ \ $L_{t+1}=L_t-\eta\nabla_Lf(L_t,R_t)\cdot (R_t^\top R_t)^{\dagger}$
\STATE \ \ \ \ \ $R_{t+1}=R_t-\eta\nabla_Rf(L_{t+1},R_t)\cdot (L_{t+1}^\top L_{t+1})^\dagger$\\
 \ \ \ \ \  ($\dagger$ denotes the Moore-Penrose-Pseudo inverse)
\STATE \textbf{end for}
\STATE \textbf{return:} $X_T=L_TR_T^\top$
\end{algorithmic}
\end{algorithm}

We shall summarize the contributions of this paper as follows: 

\begin{itemize}
    \item We propose an alternating preconditioning algorithm for the asymmetric matrix sensing problem with noisy measurements. Compared to other precondition methods, APGD does not require a damping term in the preconditioner, thus eliminating the need for parameter tuning. Moreover, APGD is less sensitive to the step size and can converge faster with larger step sizes. All these make APGD more practical and efficient than the previous methods, and it can be extended to other low-rank matrix estimation problems.
    
    \item 
    We analyze the convergence properties of APGD  and prove that it converges to the near-optimal error at a linear rate. Our analysis highlights that the advantage of APGD over other methods lies in the alternating update, which decomposes the optimization into two sub-problems. This reduces the Lipschitz constant for each subproblem, therefore allowing for larger step size. It is worth noting that our analysis framework can be extended to other low-rank matrix estimation tasks. We show that APGD also achieves linear convergence for a variety of such problems.

    \item 
    We conduct a series of experiments demonstrating that APGD converges to near-optimal recovery error at the fastest rate compared with other works, and further possesses of better robustness against the choice of step size. In addition, simulation and real-data experiments on weighted PCA, 1-bit matrix completion, and matrix completion demonstrate the broad practical potential of APGD.

\end{itemize}

\section{Related work}

Recent research in matrix sensing has focused on fast non-convex algorithms, notably the Burer-Monteiro  factorization \cite{tu2016low,zhuo2024computational,chen2015fast,sun2016guaranteed}. Despite progress, gradient descent (GD) struggles with ill-conditioning and over-parameterization, prompting extensive studies. We present a comparison of several works most relevant to our approach in Table 1.

\textbf{Preconditioning}
Gradient-based methods are highly sensitive to the condition number of the matrix, with the iteration complexity of gradient descent (GD) scaling linearly with it—i.e., $\mathcal{O}(\kappa \log(1/\epsilon))$. As the condition number increases, the convergence rate of GD deteriorates significantly \cite{zheng2015convergent,zhang2023preconditioned}.  To address this issue, a growing body of research has focused on preconditioning techniques \cite{mishra2012riemannian,wei2016guarantees,mishra2016riemannian,tanner2016low,tong2021accelerating,zhang2021preconditioned,zhang2023preconditioned,zhang2022accelerating,bian2023preconditioned,jia2024globally,jia2024preconditioning,cai2021learned,cai2024deeply}. Tong et al. \cite{tong2021accelerating} proposed the ScaledGD algorithm for a range of low-rank matrix estimation problems and provided a detailed convergence analysis. However, ScaledGD is not applicable in over-parameterized regimes. To overcome this limitation, Zhang et al. \cite{zhang2021preconditioned,zhang2023preconditioned} introduced PrecGD for over-parameterized matrix sensing, and subsequently developed an improved version to handle noisy measurements \cite{zhang2024fast}. They also extended the preconditioning framework to the online matrix completion setting \cite{zhang2022accelerating}. Preconditioning methods have also been explored in robust matrix recovery. Tong et al. \cite{tong2021low} proposed the ScaledSM algorithm for recovery under $\ell_1$ loss, establishing local linear convergence guarantees. Building on this, Giampouras et al. \cite{giampouras2024guarantees} introduced the OPSA algorithm to accelerate robust recovery in over-parameterized scenarios.  
While most of these methods focus on local convergence, Xu et al. \cite{xu2023power} went further by establishing the global convergence of the ScaledGD($\lambda$) algorithm for over-parameterized matrix sensing. More recently, Jia et al. \cite{jia2024preconditioning} provided global convergence guarantees for both ScaledGD and AltScaledGD in the matrix factorization setting.


\textbf{Over-parameterization} Earlier works \cite{tu2016low,tong2021accelerating,chen2015fast,li2018algorithmic} demonstrated that, under the exact rank assumption, gradient descent method could converge to the ground truth at a linear rate. However, since it is difficult to determine the exact rank of the matrix to be recovered in practice, recent research has focused on matrix recovery in the overestimated rank setting \cite{zhuo2024computational,li2018algorithmic,stoger2021small,soltanolkotabi2023implicit}.
Recent studies have shown that in over-parameterized settings, gradient descent \cite{zhuo2024computational} or subgradient descent \cite{ding2021rank} with spectral initialization can achieve sublinear convergence to the optimal solution. Furthermore, \cite{stoger2021small,xiong2024how,soltanolkotabi2023implicit,jin2023understanding,ma2022global} proved that using small initialization in such settings leads to linear convergence. However, small initialization typically requires a long time to escape saddle points. Overall, over-parameterization tends to slow down the convergence rate of gradient-based or subgradient-based algorithms. Studies by \cite{zhang2021preconditioned,zhang2023preconditioned,xu2023power,cheng2024accelerating,giampouras2024guarantees,zhang2024projected} have explored the issue of slow convergence in over-parameterized settings.

\textbf{Noisy matrix sensing}
For the noisy matrix sensing problem, some existing studies \cite{ma2022sharp,ma2023noisy,zhang2018primal,ma2023geometric} focus on landscape analysis, aiming to provide global guarantees on the maximum distance between any local minimum and the ground truth. Other works\cite{zhuo2024computational,ding2022validation,zhang2024fast}, including this paper, focus on analyzing the convergence rate and statistical error of algorithms. previous works have shown that vanilla gradient descent can achieve a statistical error of $\mathcal{O}(v^2 n r \log n)$, where \(r\) is the estimated rank. If we further assume that $r = \mathcal{O}(r_\star)$, then the resulting error differs from the minimax optimal error established in \cite{candes2011tight} by only a logarithmic factor. Ding et al. \cite{ding2022validation} showed that gradient descent with extremely small initialization and early stopping can achieve the optimal error. However, such approaches converge very slowly when the condition number is large, making them impractical in real-world scenarios. Zhang et al.\cite{zhang2024fast} proposed a preconditioned gradient descent algorithm for the noisy setting and proved that it achieves linear convergence up to a near-optimal error. However, their method requires tuning the damping parameter in the preconditioner and is limited to symmetric positive semidefinite matrices.

\section{Main results}
\subsection{Preliminaries}
\textbf{Notations} Singular values of a rank-$r$ matrix $X$ are donated as $\|X\|=\sigma_1(X)\ge\sigma_2(X)\ge\cdots \ge \sigma_r(X)>0$. We denote the condition number of the truth matrix $X_\star$ as $\kappa=\sigma_1(X_\star)/\sigma_{r_\star}(X_\star)$.
\begin{definition}(Restricted Isometry Property)
The linear map $\mathcal{A}(\cdot)$ is said to satisfies Restricted Isometry Property (RIP) with parameters $(r,\delta_r)$ if there exits constants $0\le \delta_r <1$ and $m>0$ such that for every rank-$r$ matrix $M$, it holds that
\begin{equation}
    (1-\delta_r)\|M\|_F^2 \le\|\A(M)\|_2^2 \le (1+\delta_r)\|M\|_F^2.
    \notag
\end{equation}
\end{definition}

RIP is a widely used condition in the field of compressed sensing, which states that the operator \(\mathcal{A}(\cdot)\) approximately preserves distances between low-rank matrices. In the absence of noise, we can establish a direct relationship between the loss function $||\mathcal{A}(LR^\top-X_\star)||_2^2$ and the recovery error $||LR^\top-X_\star||_F^2$. 

It is well known that if each measurement matrix $A_i$ consists of independent (sub-)Gaussian entries with zero mean and variance $1/m$, then the operator $\mathcal{A}$ satisfies the rank-$r$ Restricted Isometry Property (RIP) with constant $\delta > 0$, provided that the number of measurements satisfies $m \gtrsim r(n_1 + n_2)/\delta^2$; see \cite{candes2011tight} for details.

However, in the presence of noisy observations, the interference from noise prevents us from directly applying the RIP condition. Therefore, similar to \cite{zhang2024fast}, we utilize the following decomposition:
\begin{equation}
\begin{aligned}
&f(L_t,R_t) = \frac{1}{2} \| \mathcal{A}(L_t,R_t)-y \|_2^2 \\
&= \underbrace{\frac{1}{2}\| \mathcal{A}(L_t R_t^\top-X_\star) \|_2^2}_{f_c(L_t,R_t) }  + \frac{1}{2} \| s \|_2^2 - \frac{1}{2} \langle \A(L_tR_t^\top-X_\star), s \rangle. 
\notag
\end{aligned}
\end{equation}

Then, we can apply the RIP condition to derive the following inequality:
$$
(1-\delta_{2r+1})\|E_t\|_F^2 \le  f_c(L_t,R_t) \le (1+\delta_{2r+1})\|E_t\|_F^2,
$$
where $E_t=L_tR_t^\top - X_\star$.

\subsection{Main theorem for noisy matrix sensing}
Based on these preliminaries, we directly present the main result, with its detailed proof provided in the Appendix \ref{proof of the main results}.

\begin{theorem}
Suppose the following conditions hold: (1) each entry of the sensing matrix $A_i$ is independently drawn from the Gaussian distribution $\mathcal{N}(0, 1/m)$.
(2) the measurement number $m\ge C_\delta \frac{v^2 rn \log n }{ \sigma_{r_\star}(X_\star) \rho^2 \delta_{2r+1}^2}$ with constant $\delta_{2r+1}\le \frac{\rho}{8\kappa\sqrt{r_\star + r}},\ \rho\le \frac{1}{2}$; (3) the step size $\eta \le \frac{1}{(1+\delta_{2r+1})}$. Then solving the over-parameterized and noisy matrix sensing problem with algorithm 1, we have
$$
\|L_tR_t^\top-X_\star\|_F^2 \le \max \left\{ C_\delta Q_f^{2t} \|L_0 R_0^\top - X_\star\|_F^2, C_3\mathcal{E}_{opt} \right\},
$$ holds with probability at least $1-3 n^{-c_1} -2e^{-c_2 m\delta_{2r+1}} $，
where $Q_f = 1-\eta_c $, $$\eta_c = \tau \left ( \eta - \frac{\eta}{3}(1 + 2\eta(1+\delta_{2r+1}) ) \right),$$ $$\tau = \left( \sqrt{\frac{1-3\rho^2}{1-\rho^2}} -\sqrt{r+r_\star} \delta_{2r+1} \right)^2,$$ $C_\delta =\frac{1+\delta_{2r+1}}{1-\delta_{2r+1}}$, $\mathcal{E}_{opt}= C_e \frac{\nu^2rn\log n }{m}$, $n=\max\{n_1,n_2\}$, and $C_3=\frac{1}{\tau}+7$.
\label{main theorem}
\end{theorem}

\textbf{Recovery error}
Our recovery error $\mathcal{O}(\frac{\nu^2rn\log n }{m})$ is near-optimal up to a log factor, which is consistent with most existing works \cite{tu2016low,zhuo2024computational,zhang2024fast}. However, \cite{ding2022validation} proved that using small initializations, gradient descent can converge to the error of $\mathcal{O}(\nu^2 \kappa^2 \frac{r_\star n}{m})$. This error is independent of the over-rank $r$ and is optimal when the condition number is 1. However, in practical scenarios, the condition number is rarely equal to one. When it becomes large, the estimation error can increase significantly. In contrast, our error is independent of the condition number.

\textbf{Initialization} 
In our theoretical analysis, we require the initial point to be sufficiently close to the ground truth, a standard assumption commonly adopted in prior works \cite{zhang2021preconditioned,zhang2023preconditioned,zhang2024fast,zhuo2024computational,tong2021accelerating}. This condition can be easily satisfied via spectral initialization. It is important to note, however, that this requirement is primarily for theoretical guarantees. In fact, APGD is not sensitive to initialization and can converge reliably even from random starting points, as confirmed by the experimental results presented in section \ref{extension experiments}. Providing a theoretical guarantee of its global convergence is left as future work.

\textbf{Step size} APGD is highly robust to the step size; it only requires the step size to satisfy \( \eta< \frac{1}{1+\delta_{2r+1}} \). In contrast, other methods require the step size to be very small. In \cite{zhuo2024computational}, the step size is set to be $\eta=\frac{1}{100\sigma_1(X_\star)}$, which is a very small value. In \cite{ding2022validation}, the step size is set to be $\eta\le \frac{1}{c\kappa^2\sigma_1(X_\star)}$. When the condition number is large, the step size needs to be much smaller. In \cite{zhang2024fast}, the step size is set to be $\eta \le \min\left\{ \frac{L_{\delta}}{60\sqrt{2}(1+\delta+25(1+\delta)^2)},\frac{1}{7L_{\delta}} 
 \right\}$, which can easily be verified as a very small value. Therefore, APGD can converge with a larger step size, allowing it to converge faster than other methods.

\begin{remark}
\textbf{Comparison with NoisyPrecGD \cite{zhang2024fast} }
Similar to \cite{zhang2024fast}, we both consider the noisy matrix sensing problem and use preconditioning to accelerate the gradient descent. However, there are significant distinctions between our work and theirs, mainly in four aspects. First, both theoretically and experimentally, we prove that alternating update eliminate the need for a damping term, even in the presence of noise. This is a key difference from previous preconditioning-based methods, which emphasize the importance of balancing the damping parameter with the recovery error. Second, through alternating update, APGD is more robust to the step size and can converge more quickly with larger step sizes. Finally, NoisyPrecGD is limited to symmetric positive semi-definite matrices, which restricts its practical applicability. In contrast, our method is applicable to any matrix. 
\end{remark}

\subsection{Extension to general matrix estimation}
In this section, we further show that APGD can be applied to a wider range of low-rank matrix estimation problems, which can be modeled as
\begin{equation}
\underset{X\in\mathbb{R}^{n_1\times n_2}}{\operatorname{minimize} }\ g(X),\ \ \operatorname{s.t.} \ \operatorname{rank}(X)\le r.
\end{equation}

Based on the Burer–Monteiro (BM) factorization, the problem can be reformulated as the following optimization problem
\begin{equation}
\underset{L\in\mathbb{R}^{n_1\times r}, R \in\mathbb{R}^{n_2\times r}}{\operatorname{minimize} }\ g(LR^\top),
\label{equ:7}
\end{equation}
which is then solved using the APGD algorithm (Algorithm 2). In the initialization step of Algorithm 2, the specific spectral initialization method may vary depending on the problem and can be found in previous works \cite{tong2021accelerating,tu2016low}. However, spectral initialization is primarily a theoretical requirement. In practice, APGD can be directly initialized with random points, such as each entry of $L_0$ and $R_0$ is drawn independently from a random Gaussian distribution $\mathcal{N}(0, 1/n),\ n=\max\{n_1,n_2\}$.

\begin{algorithm}[h]
\caption{Alternating Preconditioned Gradient Descent (APGD) for low-rank matrix estimation}
\label{algorithm:2}
\textbf{Input:} Observations, step size $\eta$, estimated rank $r$.\\
{\textbf{Initialization}: Spectral initialization or random initialization}
\begin{algorithmic}[1] 
\STATE \textbf{for} $t=0$ to $T-1$ \textbf{do}
\STATE \ \ \ \ \ $L_{t+1}=L_t-\eta\nabla_Lg(L_tR_t^\top)\cdot (R_t^\top R_t)^{\dagger}$
\STATE \ \ \ \ \ $R_{t+1}=R_t-\eta\nabla_R g(L_{t+1}R_t^\top)\cdot (L_{t+1}^\top L_{t+1})^\dagger$\\
 \ \ \ \ \  ($\dagger$ denotes the Moore-Penrose-Pseudo inverse)
\STATE \textbf{end for}
\STATE \textbf{return:} $X_T=L_TR_T^\top$
\end{algorithmic}
\end{algorithm}

To prove the convergence of APGD, we make some assumptions on the loss function \(g\), namely restricted smoothness and restricted strong convexity, which are commonly used in prior work \cite{zhang2023preconditioned,tong2021accelerating}.
\begin{definition} (Restricted smoothness, \cite{tong2021accelerating}) A differentiable function $g:\mathbb{R}^{n_1\times n_2} \mapsto \mathbb{R}$ is said to be rank-$r$ restricted $L_g$-smooth for some $L_g>0$ if 
$$
g(X_2) \le g(X_1) + \langle \nabla g(X_1), X_2 - X_1 \rangle + \frac{L_g}{2} ||X_1 - X_2||_F^2,
$$
for any $X_1,\ X_2\in\mathbb{R}^{n_1 \times n_2}$ with rank at most $r$.

\end{definition}

\begin{definition}(Restricted strong convexity, \cite{tong2021accelerating}) A differentiable function $g:\mathbb{R}^{n_1 \times n_2} \mapsto \mathbb{R}$ is said to be rank-$r$ restricted $\mu$-strongly convex for some $\mu\ge 0 $ if 
$$
g(X_2) \ge g(X_1) + \langle \nabla g(X_1), X_2 - X_1 \rangle + \frac{\mu}{2} || X_2 - X_1 ||_F^2,
$$
for any $X_1,\ X_2\in\mathbb{R}^{n_1 \times n_2}$ with rank at most $r$.

\end{definition}

Based on these two definitions, we present a new generalized theorem for APGD in the general case, with its proof provided in the Appendix \ref{proof of the general case}.
\begin{theorem}
Suppose that $g$ is rank-$2r$ restricted $L_g$-smooth and $\mu$-strongly convex, and $X_\star$ with rank-$r_\star$ denotes the minimizer, then if we have the initialization $X_0$ satisfies $||X_0-X_\star||_F \le \rho \sigma_r(X_\star),\ \rho\le \sqrt{\frac{3}{11}}$, and step size obeys $\eta\le 1/L_g$, then solving the low-rank matrix estimation problem (\ref{equ:7}) via APGD leads to
\begin{equation}
g(X_{t+1}) -g(X_\star) \le Q_g \left[ g(X_t) - g(X_\star) \right]
\notag
\end{equation}
where $Q_g =\left(1-\eta(1-\frac{L_g\eta}{2}) \zeta^2\right)^2$, $\zeta = \frac{(C_\rho-1)L_g +(C_\rho +1)\mu }{\sqrt{2L_g}}$, and $C_\rho = \sqrt{\frac{1-3\rho^2}{1-\rho^2}}$.
\label{theorem:general}
\end{theorem}

Based on this theorem, if we set $\rho = 0.1$ as in \cite{tong2021accelerating,zhang2024projected} and choose $\eta = 1/L_g$, then we have 
\begin{equation}
g(X_{t+1}) -g(X_\star) \le (1-0.198 \frac{\mu}{L_g})^2\left[ g(X_t) - g(X_\star) \right]
\label{equ:9}
\end{equation}
for $L_g/\mu\le 9801$.

\begin{remark}
As shown in previous works \cite{tong2021accelerating,zhu2021global,li2019non,zhu2018global}, many low-rank matrix estimation problems satisfy restricted smoothness and restricted strong convexity. For detailed proofs, please refer to \cite{li2019non}. Below, we list several tasks to which Theorem \ref{theorem:general} is applicable:
\begin{itemize}

    \item \textbf{Weighted matrix factorization} The loss function $g(X) = \frac{1}{2}||W \odot (X-X_\star)||_F^2$ satisfies rank-$2r$ restricted smooth with $L=\max W_{ij}^2$ and rank-$2r$ restricted strong convexity with $\mu=\min W_{ij}^2$.
    \item \textbf{Matrix Sensing}  The loss function $g(X) = \frac{1}{2} || \mathcal{A}(X-X_\star) ||_2^2$ satisfies rank-$2r$ restricted smooth with $L=1+\delta$ and rank-$2r$ restricted strong convexity with $\mu=1-\delta$ if the linear map $\mathcal{A}(\cdot)$ satisfies rank-$2r$ RIP with constant $\delta$.
    \item \textbf{Matrix completion} As proved in [\cite{jain2010guaranteed}, Theorem 4.2], When the sampling rate exceeds a certain threshold, all rank-\(r\) matrices that are \(\xi\)-incoherent satisfy the rank-\(r\) RIP condition. Here, a matrix \(X \in \mathbb{R}^{n_1 \times n_2}\) with singular value decomposition $X = USV^\top$ is said to be $\xi$-incoherent if it satisfies  $$\max_{ij} |U_{ij}| \le \sqrt{\frac{\xi}{n_1}}, \quad \max_{ij} |V_{ij}| \le \sqrt{\frac{\xi}{n_2}}.$$ Therefore, under certain conditions, matrix completion can be viewed as a special case of matrix sensing, and thus naturally satisfies the rank-$2r$ restricted smoothness and restricted strong convexity.
    
\end{itemize}
\end{remark}

\begin{table}[ht]
\caption{Comparison of related works in low-rank matrix estimation. In the second column, the upper bounds of step size in the previous work are listed. The third line refers to the decay rate of the loss function $g$, defined as $Q_g = \frac{g(X_{t+1}) - g(X_\star)}{g(X_t) - g(X_\star)}.
$ The fourth column indicates whether the asymmetric factorization is considered. The fifth column indicates whether the over-rank situation is considered}
\centering
\setlength{\tabcolsep}{4pt} 
\begin{tabular}{cccccc}
\hline
methods & step size  & decay rate $Q_g$ & asymmetry & over rank\\ \hline

ScaledGD \cite{tong2021accelerating} & $\le \frac{2}{5L_g}$   &  $1- \frac{7\mu}{25L_g}$ & \Checkmark  & \XSolidBrush \\ \hline
PrecGD \cite{zhang2023preconditioned} & $= \frac{1}{4L_g}$   & $1-\frac{\mu^2}{8L_g^2}$ & \XSolidBrush & \Checkmark \\ \hline
ProjGD \cite{zhang2024projected} &  $\le \frac{1}{2L_g}$   & $1-\frac{\mu}{27L_g}$ & \Checkmark & \Checkmark\\ \hline

ours &  $\le\frac{1}{L_g}$   & $(1-\frac{0.198\mu}{L_g})^2$ & \Checkmark  & \Checkmark \\ \hline

\end{tabular}
\label{table:1}
\end{table}

\begin{remark} The works \cite{tong2021accelerating,zhang2023preconditioned} have also investigated general low-rank matrix estimation problems via preconditioning technique. \cite{tong2021accelerating} analyzes the convergence of ScaledGD under restricted smoothness and restricted strong convexity. Compared to \cite{tong2021accelerating}, APGD can handle over-parameterized settings. \cite{zhang2023preconditioned} studies the convergence of PrecGD in the over-parameterized case, but it requires estimating a damping parameter and is limited to symmetric positive semidefinite matrices. In contrast, APGD does not require tuning a damping parameter and can be applied to general (not necessarily symmetric or PSD) matrices. 
\end{remark}

\begin{remark}
A recent advancement in low-rank matrix estimation is the Projected Gradient Descent (ProjGD) method introduced by Zhang et al. \cite{zhang2024projected}. They established both local and global convergence guarantees for ProjGD and demonstrated that it exhibits a linear convergence rate. Specifically, they proved that if the initialization satisfies $\|X_0 - X_\star\|_F \le 0.1 \sigma_{r_\star}(X_\star)$ and the step size obeys $\eta \le 1/(2L_g)$, then

$$
g(X_{t+1}) - g(X_\star) \le \left(1 - \frac{\mu}{27L_g} \right) [g(X_t) - g(X_\star)].
$$

In contrast, APGD method achieves a faster convergence rate, as shown in Table 2, and demonstrates greater robustness with respect to the choice of step size. Moreover, ProjGD requires computing the SVD at each iteration, which is computationally expensive. As a result, its actual runtime increases rapidly with the matrix size. In contrast, APGD is significantly more practical and scalable in real applications.

\end{remark}

\section{Key idea and proof sketch}

\subsection{The role of damping parameter $\lambda$ in previous works}
First, we examine why previous works \cite{zhang2021preconditioned,xu2023power,zhang2024fast} rely on the damping term \( \lambda I \). To address the slow convergence of gradient descent in the over-parameterized and ill-conditioned cases, \cite{zhang2021preconditioned} introduced PrecGD, which accelerates convergence by adding a right preconditioner after the gradient. Based on the preconditioner \( P =  L^\top L + \lambda I \), they defined the corresponding local P-norm:
\begin{equation}
    \|X\|_P\overset{\operatorname{def}}{=}\|XP^{\frac{1}{2}}\|_F,\ \|X\|_P^*\overset{\operatorname{def}}{=}\|XP^{-\frac{1}{2}}\|_F.
\end{equation}
Using this, they derived an inequality similar to a Lipschitz condition: 
\begin{equation}
f(L-\eta D) \le f(L) -\eta \langle \nabla f(L), D \rangle +\frac{\eta^2 L_p}{2}\| D \|_P^2, 
\label{equ:8} 
\end{equation} where
$$
L_p = 2(1+\delta)\left[ 4+ \frac{2\|E_\natural\|_F+4\|D\|_P}{\lambda_r^2(L)+\lambda} + \left( \frac{\|D\|_P}{\lambda_r^2(L)+\lambda} \right)^2 \right],
$$
$D$ is the descent direction, and for simplicity, $L L^\top - X_\star  = E_\natural $.

From the above inequality, we can observe that the smaller \( L_p \) is, the faster the algorithm converges. Moreover, from the definition of \( L_p \), we can see that the smaller \( L_p \) becomes, the larger \( \lambda \) must be. However, the convergence of the algorithm also depends on another inequality, namely the Polyak-Lojasiewicz inequality:
\begin{equation}
 \langle \nabla f(L), D \rangle \overset{(i)}{=} \|\nabla f(L)\|_P^* \ge \mu_P f(L),
\end{equation}
where $(i)$ using the assumption that $D=\nabla f(L)(L^\top L +\lambda I)^{-1}$. 
From this inequality, we see that larger \( \mu_P \) leads to faster the convergence. However, \cite{zhang2021preconditioned} proved that as \( \mu_P \) increases, \( \lambda \) must decrease. Combining these two inequalities, for PrecGD, \( \lambda \) must satisfy $\lambda_t = \Theta(\|L_t^\top L_t - X_\star\|_F).$

Next, let's analyze Equation (\ref{equ:8}) in detail to understand why \( L_p \) is related to \( \lambda \). We will derive Equation (\ref{equ:8}) step by step to understand this relationship. 

Let us proceed with the detailed derivation:
\begin{equation}
\begin{aligned}
& f(L-\eta D) = \left\| \mathcal{A}\left((L-\eta D)(L-\eta D)^\top-X_\star \right) \right\|_2^2 \\ 
&=   \|\A(E_\natural)\|_2^2 -  2 \langle \A(E_\natural), \A(LD^\top + DL^\top) \rangle \\
& + \| \A(LD^\top+DL^\top)\|_2^2 +  \langle \A(LD^\top + DL^\top), \A(DD^\top)\rangle \\
&-2\langle \A(E_\natural), \A(DD^\top)  \rangle + \| \A(DD^\top) \|_2^2 .
\end{aligned}
\notag
\end{equation}
From this expression, we can see that the quadratic term of the gradient, \( DD^\top \), is the term that makes \( L_p \) related to the damping parameter \( \lambda \). For example, for \( \A(DD^\top) \), we have:
\begin{equation}
\|\A(DD^\top)\|_2^2 \leq (1 + \delta)^2 \|D\|_F^4 \leq \frac{\|D\|_P^4}{\lambda_r^2(L) + \lambda}.
\label{equ:12}
\end{equation}

This shows that \( L_p \) becomes dependent on \( \lambda \) as the damping parameter influences the magnitude of the quadratic gradient term.

\subsection{How alternating helps: damping free and large step size}

As shown in Equation (\ref{equ:12}), the quadratic term of the gradient \( D \) is the reason why \( L_p \) depends on \( \lambda \). It is important to note that a similar issue arises for the non-symmetric decomposition \( X = L R^\top \), since GD synchronously updates the two factor matrices $L$ and $R$. Therefore, if we can avoid this term, then \( L_p \) would no longer depend on \( \lambda \).  Unlike GD, APGD updates the two factor matrices in an alternating manner, which avoids the quadratic terms in the gradient.

Based on Algorithm 1, we can derive the following Lemma for the noiseless case, 
\begin{lemma}
For the noiseless matrix sensing problem, suppose that the linear map $\A(\cdot)$ satisfies the rank-($2r+1$) RIP with constant $\delta_{2r+1}$, then we have
\begin{equation}
\begin{aligned}
f_c(L_t -\eta D^L_t,R_t) & \le f(L_t,R_t) 
- \eta \langle  \nabla_L f(L_t,R_t), D^L_t\rangle \\
& + \frac{\eta^2L_f}{2} \| D^L_t (R_t^\top R_t)^{\frac{1}{2}} \|_F^2 \\
f_c(L_{t+1},R_t-\eta D^R_t) & \le f(L_{t+1},R_t)  - \eta \langle  \nabla_R f(L_{t+1},R_t), D^R_t\rangle  \\
&  + \frac{\eta^2L_f}{2} \| D^R_t (L_{t+1}^\top L_{t+1})^{\frac{1}{2}} \|_F^2,
\end{aligned}
\notag
\end{equation}
where $D^L_t = \nabla_L f(L_t,R_t)(R_t^\top R_t)^\dagger$ and $D^R_t = \nabla_R f(L_{t+1},R_t)(L_{t+1}^\top L_{t+1})^\dagger $ are the descent directions of APGD and $L_f = 1+\delta_{2r+1}$.
\label{Lemma: noiseless lipschitz}
\end{lemma}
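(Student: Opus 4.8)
The plan is to prove this Lipschitz-type descent inequality by expanding $f_c$ directly, exploiting the crucial structural fact that the alternating update makes each sub-step effectively \emph{linear} in the descent direction, so no quadratic-in-$D$ term survives. Consider the first inequality. Write $E_t = L_t R_t^\top - X_\star$ and let $D_L$ be the update direction, so the new iterate is $(L_t - \eta D_L) R_t^\top$. Then
\begin{equation}
(L_t - \eta D_L) R_t^\top - X_\star = E_t - \eta D_L R_t^\top,
\notag
\end{equation}
which is \emph{affine} in $\eta D_L$ — this is the whole point, and the reason the asymmetric/alternating scheme avoids the $\A(DD^\top)$ term that plagued the symmetric analysis in Equation (\ref{equ:13}). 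Expanding the squared norm gives
\begin{equation}
f_c(L_t - \eta D_L, R_t) = \tfrac{1}{2}\|\A(E_t)\|_2^2 - \eta \langle \A(E_t), \A(D_L R_t^\top)\rangle + \tfrac{\eta^2}{2}\|\A(D_L R_t^\top)\|_2^2.
\notag
\end{equation}

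Next I would identify the first two terms. The first is exactly $f_c(L_t, R_t)$. For the second, note $\nabla_L f(L_t, R_t) = \A^*\A(E_t) R_t$ (in the noiseless case $f = f_c$), so $\langle \A(E_t), \A(D_L R_t^\top)\rangle = \langle \A^*\A(E_t), D_L R_t^\top\rangle = \langle \A^*\A(E_t) R_t, D_L\rangle = \langle \nabla_L f(L_t, R_t), D_L\rangle$, which recovers the linear term with the correct coefficient $-\eta$. For the quadratic term, the matrix $D_L R_t^\top$ has rank at most $r$, so actually rank at most $r + r_\star$ after accounting for $E_t$'s structure — but here $D_L R_t^\top$ alone has rank $\le r \le r + r_\star$, so RIP applies directly and $\|\A(D_L R_t^\top)\|_2^2 \le (1+\delta_{r+r_\star})\|D_L R_t^\top\|_F^2$. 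The final step is to rewrite $\|D_L R_t^\top\|_F^2$ in the preconditioned norm: since $R_t (R_t^\top R_t)^{\dagger/2}$ has orthonormal columns spanning the column space of $R_t$, and (this needs a small check) the relevant factors of $D_L$ lie in that column space or the pseudo-inverse absorbs them correctly, we get $\|D_L R_t^\top\|_F^2 = \|D_L (R_t^\top R_t)^{1/2} (R_t^\top R_t)^{\dagger/2} R_t^\top \cdot \text{(something)}\|$ — more carefully, $\|D_L R_t^\top\|_F = \|D_L (R_t^\top R_t)^{1/2}\|_F \ge \|D_L (R_t^\top R_t)^{\dagger/2}\| \cdot (\dots)$; actually the clean identity is that when $D_L = \nabla_L f \cdot (R_t^\top R_t)^\dagger$ the product $D_L R_t^\top = \nabla_L f (R_t^\top R_t)^\dagger R_t^\top$ and $\|D_L R_t^\top\|_F^2 = \|D_L (R_t^\top R_t)^{1/2} \Pi_{R_t}\|_F^2 \le \|D_L (R_t^\top R_t)^{1/2}\|_F^2 = \|D_L (R_t^\top R_t)^{\dagger/2} \cdot R_t^\top R_t\|$, hmm. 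The point to carry out carefully: express the Frobenius norm of $D_L R_t^\top$ in terms of $\|D_L (R_t^\top R_t)^{\dagger/2}\|_F$, which holds with equality (or $\le$) precisely because $D_L$ has the form of a gradient times $(R_t^\top R_t)^\dagger$, putting its row space inside the range of $R_t^\top R_t$.

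The second inequality is entirely symmetric: after the $L$-update is in hand, one repeats the argument with the roles of $L_{t+1}$ and $R_t$ swapped, using $(L_{t+1} R_t^\top - X_\star) - \eta L_{t+1} D_R^\top$ affine in $\eta D_R$, $\nabla_R f(L_{t+1}, R_t) = \A^*\A(L_{t+1}R_t^\top - X_\star)^\top L_{t+1}$, and RIP on the rank-$\le r$ matrix $L_{t+1} D_R^\top$, ending with the preconditioned norm $\|D_R (L_{t+1}^\top L_{t+1})^{\dagger/2}\|_F^2$. I expect the main obstacle to be the bookkeeping in the last step — justifying that $\|D_L R_t^\top\|_F^2 \le \|D_L (R_t^\top R_t)^{\dagger/2}\|_F^2$ (equivalently that the projection onto $\mathrm{range}(R_t)$ is harmless given the pseudo-inverse structure of $D_L$), together with confirming that every low-rank matrix to which RIP is applied genuinely has rank at most $r + r_\star$ (here it is even rank $\le r$, so this is comfortable). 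Everything else — the algebraic expansion, identifying the gradient inner product, and matching the constant $l_p = 1 + \delta_{r+r_\star}$ — is routine once the affine structure is exploited.
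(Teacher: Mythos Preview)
Your approach is correct and is essentially what the paper does: the paper does not give a stand-alone proof of this noiseless lemma but proves it as the noiseless specialization of Lemma~\ref{lemma:a.1} in the appendix, via exactly the affine expansion you wrote, the identification of the cross term with $\langle \nabla_L f_c, D_L\rangle$, and the RIP bound on the rank-$\le r$ matrix $D_L R_t^\top$.

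Your hesitation at the last step is well-founded, and it is worth saying precisely what is going on. For \emph{general} $D_L$ the exact identity is
\[
\|D_L R_t^\top\|_F^2 \;=\; \operatorname{tr}\!\bigl(D_L R_t^\top R_t D_L^\top\bigr)\;=\;\|D_L (R_t^\top R_t)^{1/2}\|_F^2,
\]
i.e.\ the $P_{R_t}$-norm, not the dual norm with the pseudo-inverse. This is the clean statement analogous to equation~(\ref{equ:8}), and your expansion already proves it with $l_p=1+\delta_{r+r_\star}$. The $\dagger/2$ appearing in the displayed lemma is either a typo for $1/2$, or the lemma is meant to be read with $D_L$ already specialized to the APGD direction $\nabla_L f\,(R_t^\top R_t)^\dagger$. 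The paper in its proof of Lemma~\ref{lemma:a.1} takes the second route: it substitutes that specific $D_L$ from the outset, writes
\[
D_L R_t^\top \;=\; \nabla_L f\,(R_t^\top R_t)^{\dagger/2}\cdot (R_t^\top R_t)^{\dagger/2} R_t^\top,
\]
and uses $\|(R_t^\top R_t)^{\dagger/2} R_t^\top\|_2=1$ to obtain $\|D_L R_t^\top\|_F\le \|\nabla_L f\|_{P_{R_t}^*}$. Either way your argument goes through; just do not try to force the inequality $\|D_L R_t^\top\|_F^2\le \|D_L (R_t^\top R_t)^{\dagger/2}\|_F^2$ for arbitrary $D_L$, since that is false in general.
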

\begin{IEEEproof}
See Appendix \ref{Proof of Lipschitz-like inequality}.
\end{IEEEproof}

From this lemma, we can see that for APGD, \( L_f \) is independent of the damping parameter. In other words, APGD does not require a damping parameter. This is one of the key advantages of APGD, as it avoids the need for careful tuning of the damping parameter, which is typically required in methods like PrecGD.  

Another advantage of APGD is its robustness to the step size. As is well known, the upper bound on the step size in gradient descent depends on the gradient Lipschitz constant \( L \), i.e., \( \eta \leq \frac{1}{L} \). For other preconditioned methods, the value of \( L \) is typically very large, which results in a very small step size, as discussed in Section 3. However, for APGD, the step size only needs to satisfy \( \eta \leq \frac{1}{1 + \delta_{r + r_\star}} \), which is a rather mild condition.

\subsection{Proof outline}
Based on the above analysis, we outline the proof of APGD convergence under noisy conditions. First, inspired by the work of \cite{zhang2021preconditioned,zhang2024fast} and \cite{cheng2024accelerating}, we introduce two local norms and their corresponding dual norms
\begin{equation}
\begin{aligned}
    &\|A\|_{R_t} \overset{\operatorname{def}}{=}\|AP_{R_t}^{\frac{1}{2}}\|_F,\ \|A\|^*_{R_t} \overset{\operatorname{def}}{=}\|AP_{R_t}^{\frac{\dagger}{2}}\|_F,\ P_{R_t}\overset{\operatorname{def}}{=} R_t^\top R_t, \\
    &\|A\|_{L_t} \overset{\operatorname{def}}{=}\|AP_{L_t}^{\frac{1}{2}}\|_F,\ \|A\|^*_{L_t} \overset{\operatorname{def}}{=}\|AP_{L_t}^{\frac{\dagger}{2}}\|_F,\ P_{L_t}\overset{\operatorname{def}}{=} L_t^\top L_t.
\end{aligned}
\notag
\end{equation}
 Using these norms, we derive a Lipschitz-like lemma.
 \begin{lemma}(Lipschitz-like inequality)
Suppose that we have $ \| \nabla_L f_c(L_t,R_t) \|_{P_{R_t}^*} \ge 3 \| \mathcal{A}^*(s)R_t \|_{P_{R_t}^*}$, $\|\nabla_R f_c(L_{t+1},R_t)\|_{P_{L_{t+1}}^*} \ge 3 \| \mathcal{A}^*(s)L_{t+1}^\top \|_{P_{L_{t+1}}^*}$, and $\A(\cdot)$ satisfies the rank-$(2r+1)$ RIP with constant $\delta_{2r+1}$, then we have  
\begin{equation}
\begin{aligned}
& f_c (L_{t+1},R_t) \le f_c(L_t,R_t) - C_2 \| \nabla_L f_c(L_t,R_t) \|_{P_{R_t}^*}\\ 
 &f_c (L_{t+1},R_{t+1}) \le f_c(L_{t+1},R_t) - C_2 \| \nabla_R f_c(L_{t+1},R_t) \|_{P_{L_{t+1}}^*}\\
\end{aligned}
\notag
\end{equation}
where $C_2=\eta - \frac{\eta}{3}(1 + 2 \eta (1+\delta_{2r+1}) )$.
\label{lemma:lipschitz}
\end{lemma}
\begin{IEEEproof}
See Appendix \ref{Proof of Lipschitz-like inequality}.
\end{IEEEproof}

 The key difference between this lemma and the previous noise-free lemma is the inclusion of assumptions on the noise term $\{\| \mathcal{A}^*(s)R_t \|_{P_{R_t}^*}, \| \mathcal{A}^*(s)L_{t+1}^\top \|_{P_{L_{t+1}}^*}\}$ and the gradient term $\{\| \nabla_L f_c(L_t,R_t) \|_{P_{R_t}^*},\|\nabla_R f_c(L_{t+1},R_t)\|_{P_{L_{t+1}}^*}\}$. This new lemma demonstrates that when the gradient term dominates the noise term, APGD converges linearly. 

Next, we need to establish a lower bound for the gradient term, which leads to the following lemma.

\begin{lemma}
Suppose that the linear map $\A(\cdot)$ satisfy the $\delta_{2r+1}$-RIP and the initial point $||L_0R_0^\top -X_\star||_F\le \rho \sigma_{r_\star}(X_\star),\ \rho\le \frac{1}{2}$, then we have
\begin{equation}
\begin{aligned}
\|\nabla_L f_c (L_t,R_t)\|_{P_{R_t}^*}^2 &\ge \tau f_c (L_{t},R_t), \\
\|\nabla_R f_c (L_{t+1},R_t)\|_{P_{L_{t+1}}^*}^2 &\ge \tau f_c (L_{t+1},R_t), \\
\end{aligned}
\end{equation}
where $\tau = \left( \sqrt{\frac{1-3\rho^2}{1-\rho^2}} -\sqrt{r+r_\star} \delta_{2r+1} \right)^2.$
\label{lemma: gradient dominance}
\end{lemma}
\begin{IEEEproof}
See Appendix \ref{Proof of the gradient dominance}.
\end{IEEEproof}
Combining these two lemmas, we can easily conclude that when the gradient term dominates the noise term, APGD converges linearly, i.e., 
\begin{equation}
\begin{aligned}
f_c(L_{t+1},R_{t+1})& \le Q_f^2 \cdot f_c(L_t,R_t)\\
\|L_tR_t^\top-X_\star\|_F^2 &\le  C_\delta Q_f^{2t} \cdot \|L_0 R_0^\top - X_\star\|_F^2,
\end{aligned}
\end{equation}
where $Q_f$ and $C_\delta$ are the same parameters as defined in Theorem \ref{main theorem}.

Next, we need to consider the case where the noise term is smaller than the gradient term. In this case, we can combine Lemma \ref{lemma: gradient dominance} to derive 
\begin{equation}
\begin{aligned}
     &f_c (L_{t},R_t) \le \frac{1}{\tau} \|\nabla_L f_c (L_t,R_t)\|_{P_{R_t}^*}^2, \\
 & f_c (L_{t+1},R_t) \le \frac{1}{\tau} \|\nabla_R f_c (L_{t+1},R_t)\|_{P_{L_{t+1}}^*}^2. \\
\end{aligned}
\label{equ:13}
\end{equation}

Then, combining equation (\ref{equ:13}) and matrix concentration bounds, we can conclude that when the gradient term is smaller than the noise term, we have 
$$
\| L_tR_t^\top - X_\star \|_F^2 \le C_3 \cdot \mathcal{E}_{opt}.
$$
This is the general outline of the proof for Theorem 3. The detailed proof can be found in Appendix \ref{proof of the main results}.

\section{Experiments}
In this section, we conduct a series of experiments to evaluate the effectiveness of APGD. Results on the noisy matrix sensing task show that APGD does not require an additional damping parameter and is highly robust to the choice of step size. It achieves linear convergence to near-minimal error even in over-parameterized and ill-conditioned settings. Compared to NoisyPrecGD \cite{zhang2024fast} and GD \cite{ding2022validation}, APGD requires fewer iterations and less computation time. In addition, we conduct both synthetic and real-data experiments on other low-rank matrix estimation tasks, including weighted PCA \cite{srebro2003weighted}, 1-bit matrix completion \cite{davenport20141}, and matrix completion \cite{jain2010guaranteed}. The results demonstrate that APGD can be broadly applied to a wide range of low-rank matrix estimation problems. The experimental code is available at \href{https://github.com/ZhiyuLiu3449/APGD}{https://github.com/ZhiyuLiu3449/APGD}.

\subsection{Experiments for noisy matrix sensing}
\textbf{Experimental setup} 
The target rank-$r_\star$ matrix $X_\star\in\mathbb{R}^{n_1\times n_2}$ with condition number $\kappa$ is generated as $X_\star=U_\star \Sigma V_\star^\top$, where $U_\star$ and $V_\star$ are both orthogonal matrix and $\Sigma$ is a diagonal matrix with condition number $\kappa$. The entries of the sensing matrix $A_i$ are sampled i.i.d from distribution $\mathcal{N}(0,\frac{1}{m})$. The entries of the noise $\textbf{s}$ are sampled i.i.d from distribution $\mathcal{N}(0,\nu^2)$. For all three methods, we adopt the spectral initialization described in Algorithm 1.

\begin{figure*}[htbp]
\centering
\subfigure[]{
\begin{minipage}[t]{0.25\linewidth}
\centering
\includegraphics[width=4.5cm,height=4.5cm]{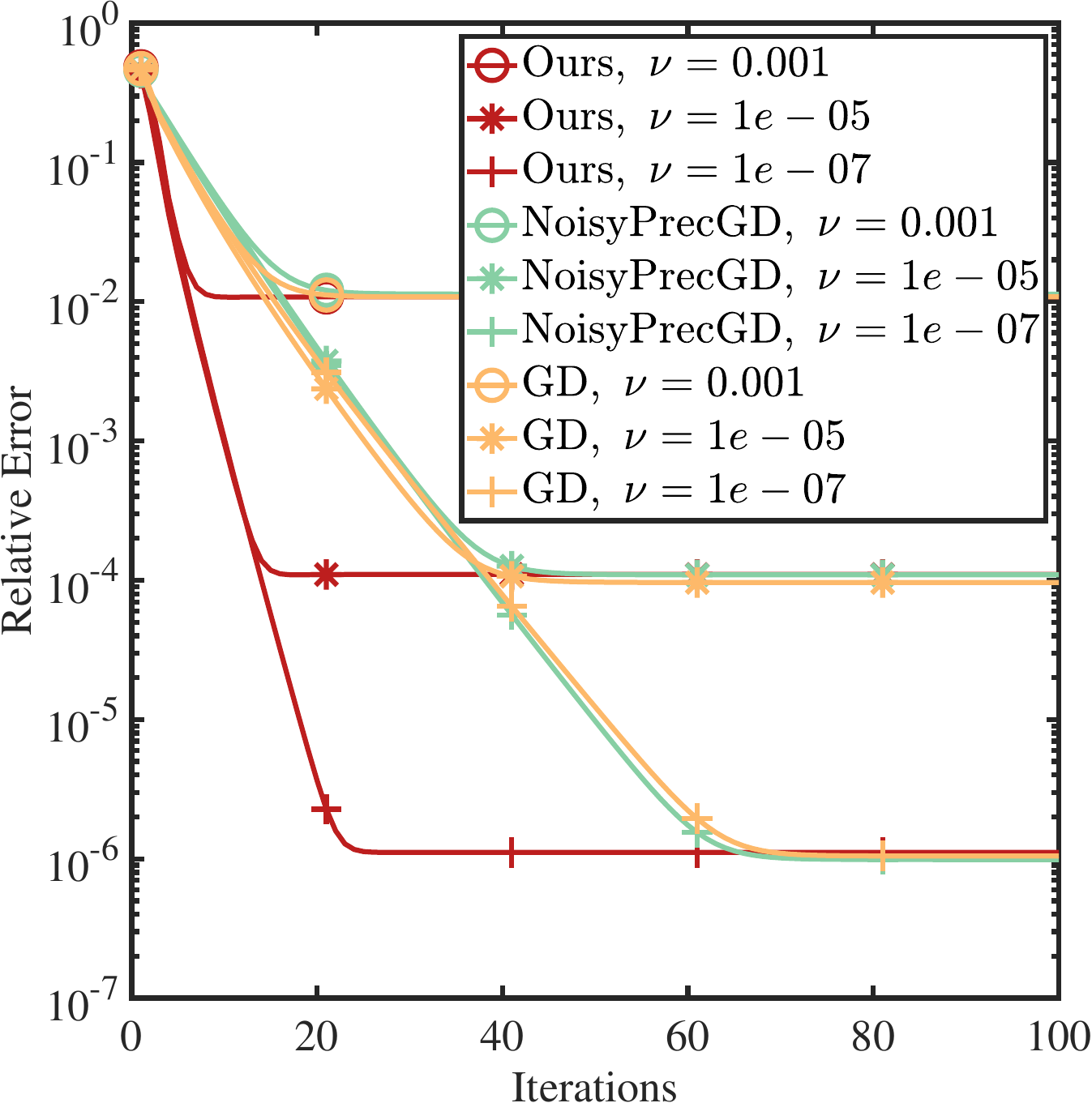}
\end{minipage}%
}%
\subfigure[]{
\begin{minipage}[t]{0.25\linewidth}
\centering
\includegraphics[width=4.5cm,height=4.5cm]{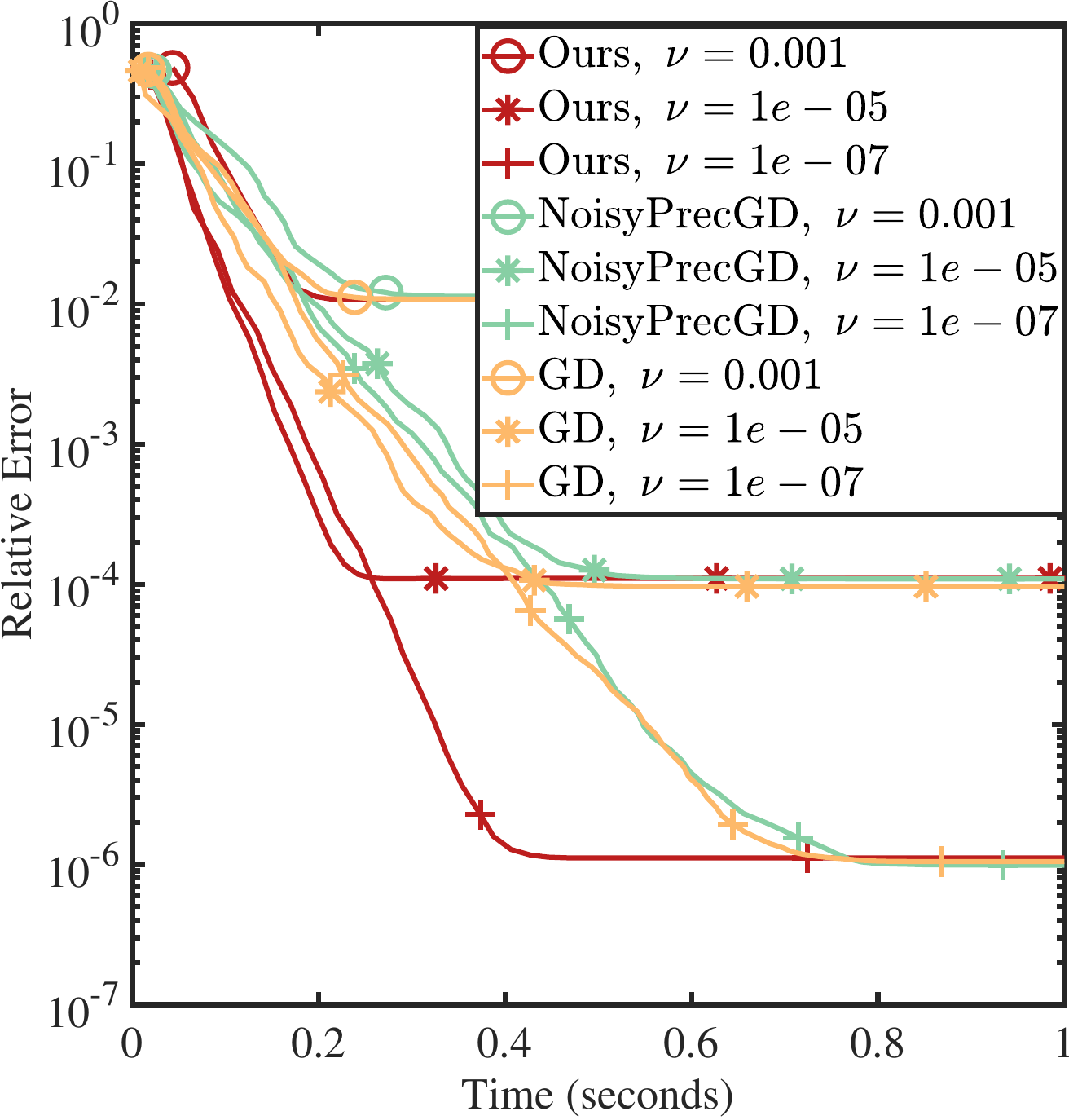}
\end{minipage}%
}%
\subfigure[]{
\begin{minipage}[t]{0.25\linewidth}
\centering
\includegraphics[width=4.5cm,height=4.5cm]{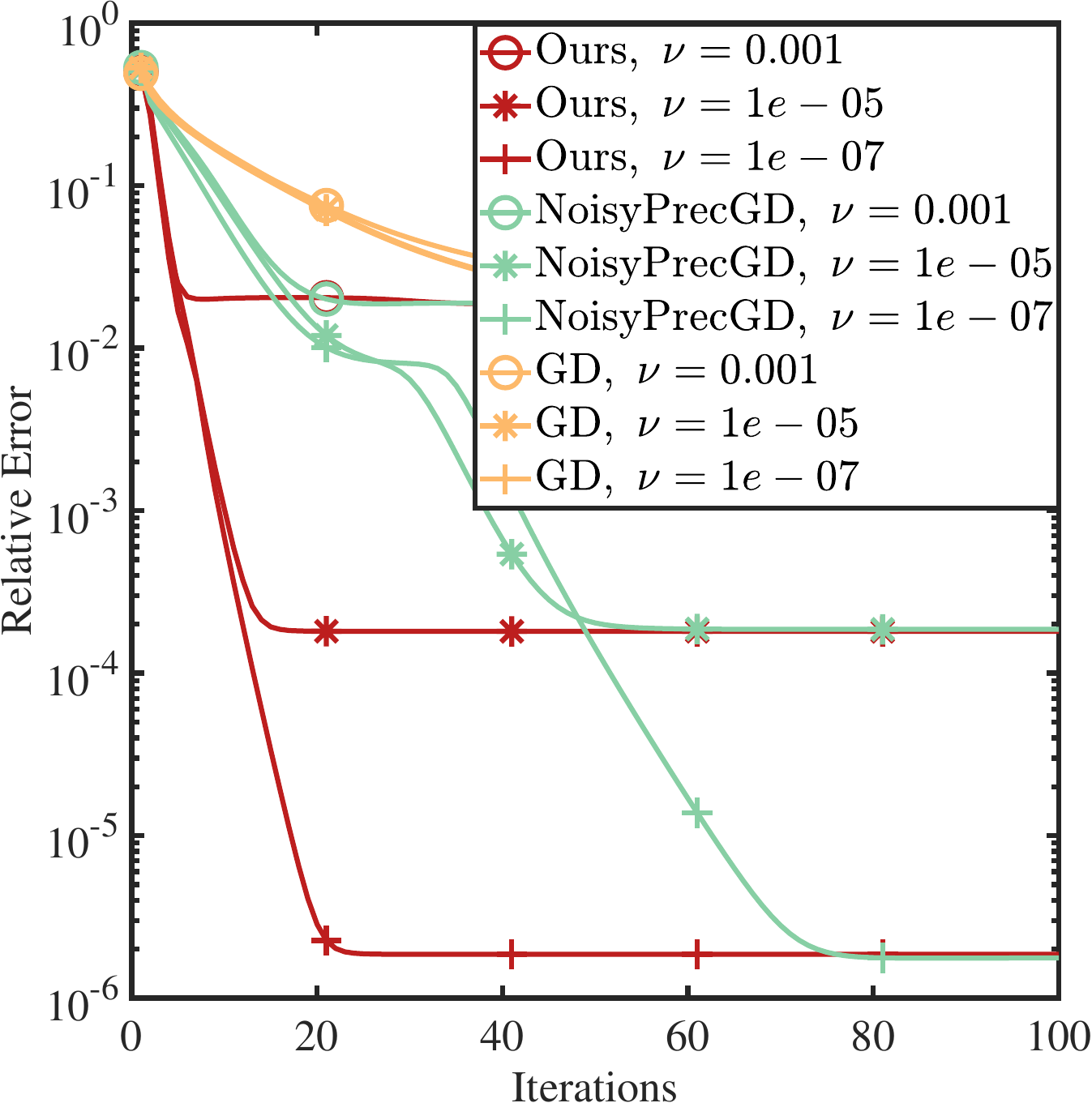}
\end{minipage}%
}%
\subfigure[]{
\begin{minipage}[t]{0.25\linewidth}
\centering
\includegraphics[width=4.5cm,height=4.5cm]{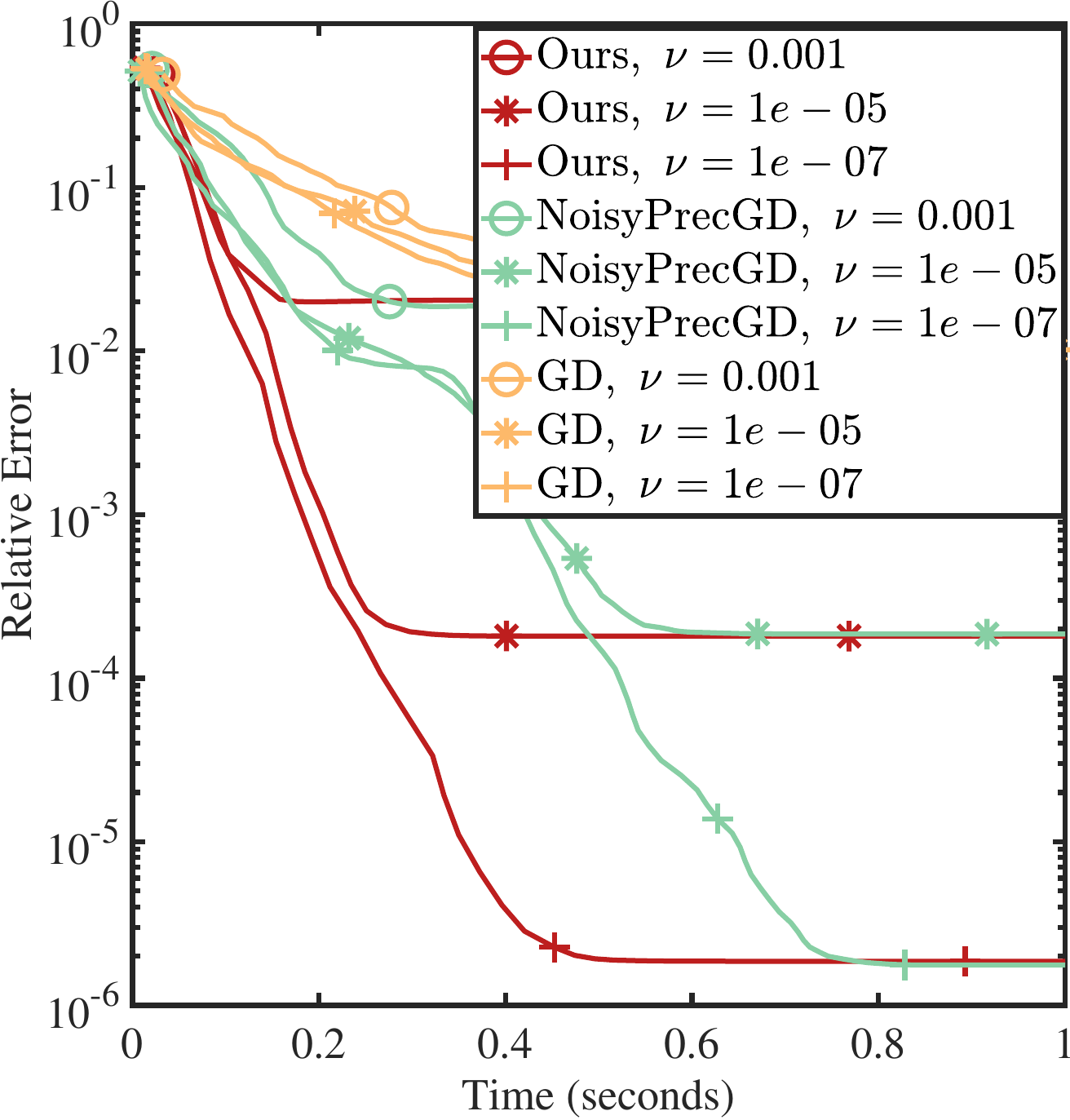}
\end{minipage}%
}%
\centering

\caption{Relative recovery error and computation time of NoisyPrecGD, GD, and APGD on the exact-rank noisy matrix sensing problem, where $n_1 = n_2 = 20$, $r_\star = r = 5$, and $m = 10n_1r$. The step sizes for each method are tuned to achieve the fastest convergence: APGD uses a step size of 1, while GD uses a step size of 0.5 and NoisyPrecGD uses a step size of 0.7. Subfigures (a) and (b) correspond to a condition number of 1, while (c) and (d) correspond to a condition number of 100.
}
\label{fig:1}
\end{figure*}

\begin{figure*}[htbp]
\centering
\subfigure[]{
\begin{minipage}[t]{0.25\linewidth}
\centering
\includegraphics[width=4.5cm,height=4.5cm]{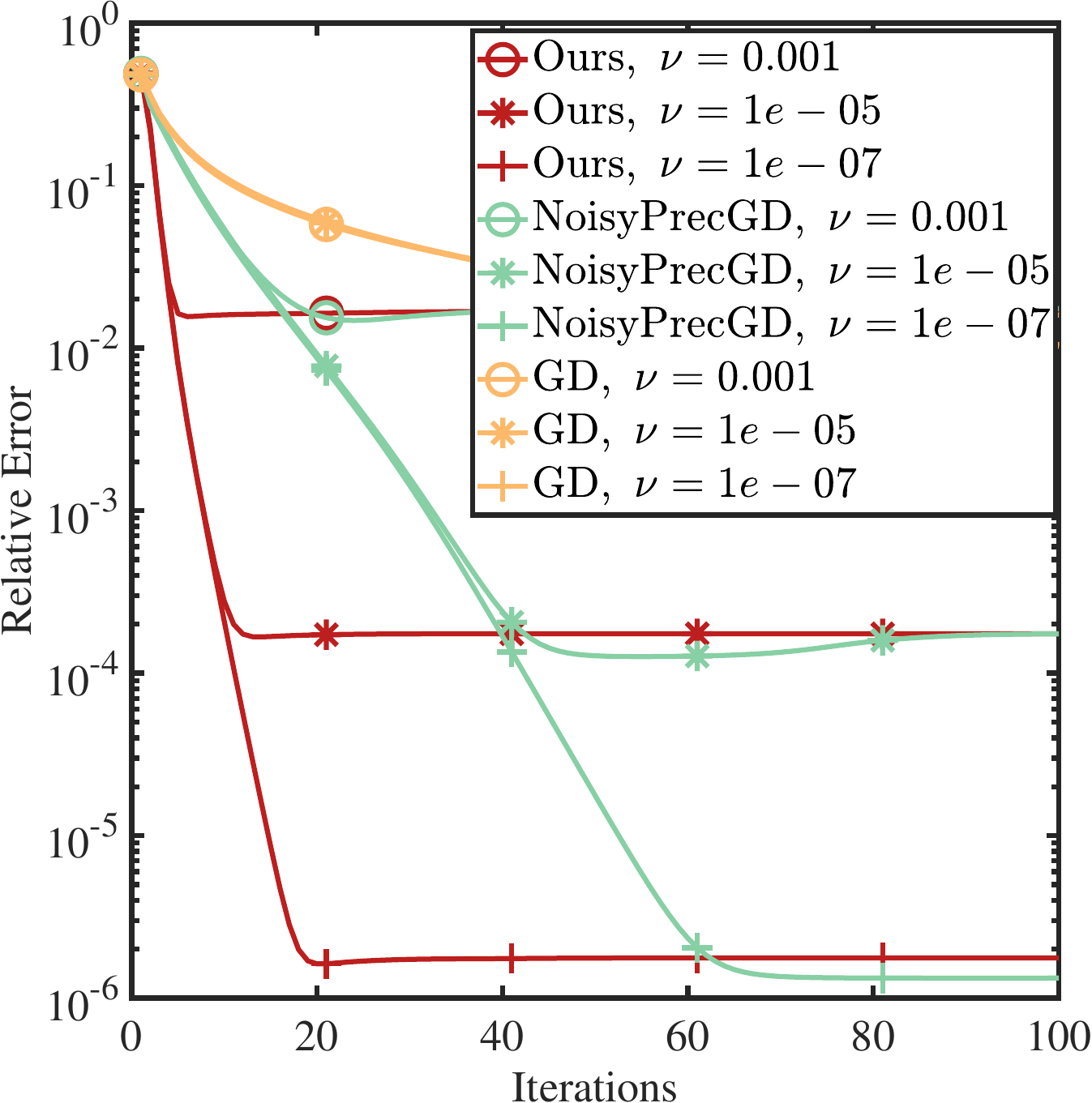}
\end{minipage}%
}%
\subfigure[]{
\begin{minipage}[t]{0.25\linewidth}
\centering
\includegraphics[width=4.5cm,height=4.5cm]{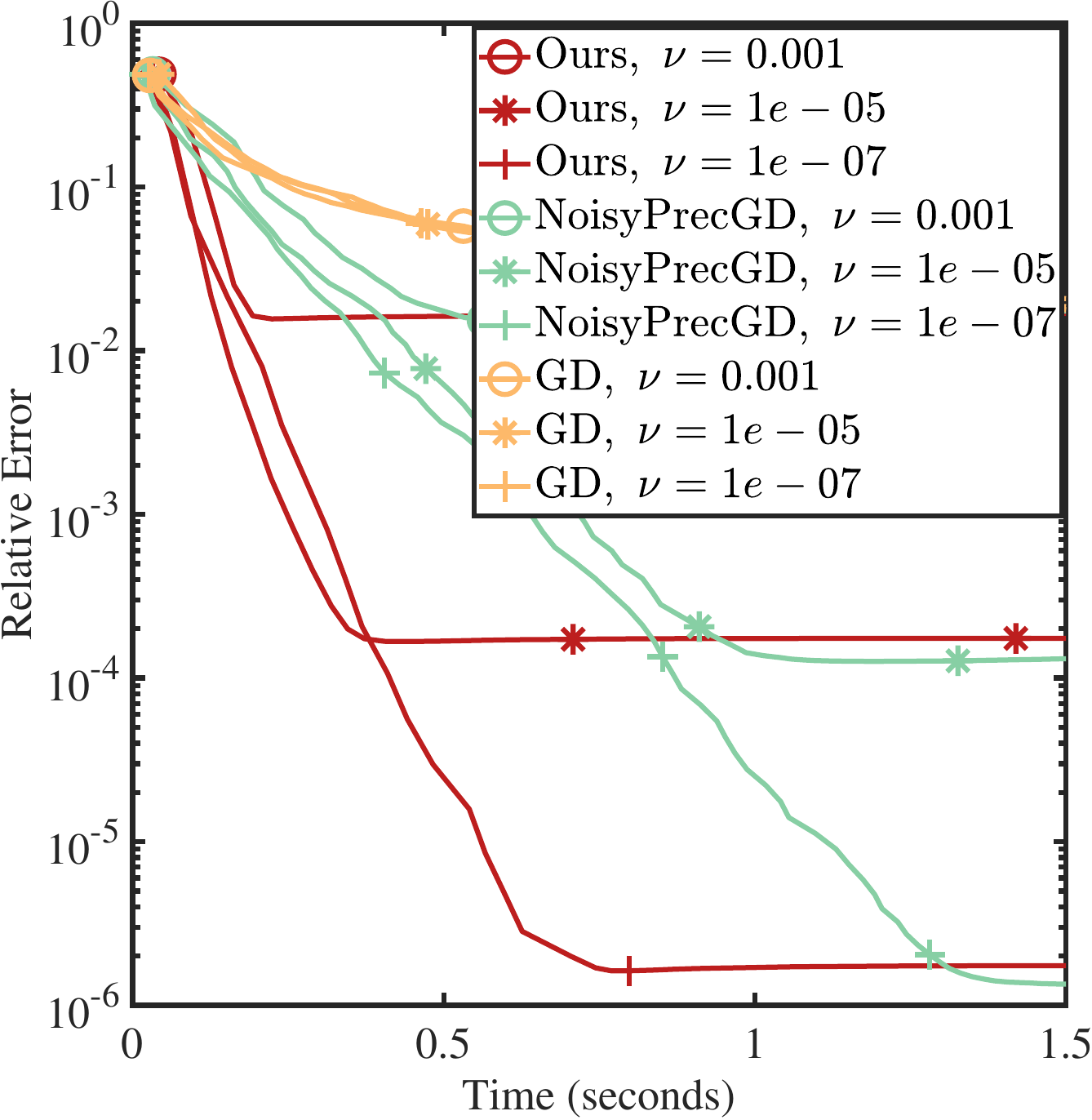}
\end{minipage}%
}%
\subfigure[]{
\begin{minipage}[t]{0.25\linewidth}
\centering
\includegraphics[width=4.5cm,height=4.5cm]{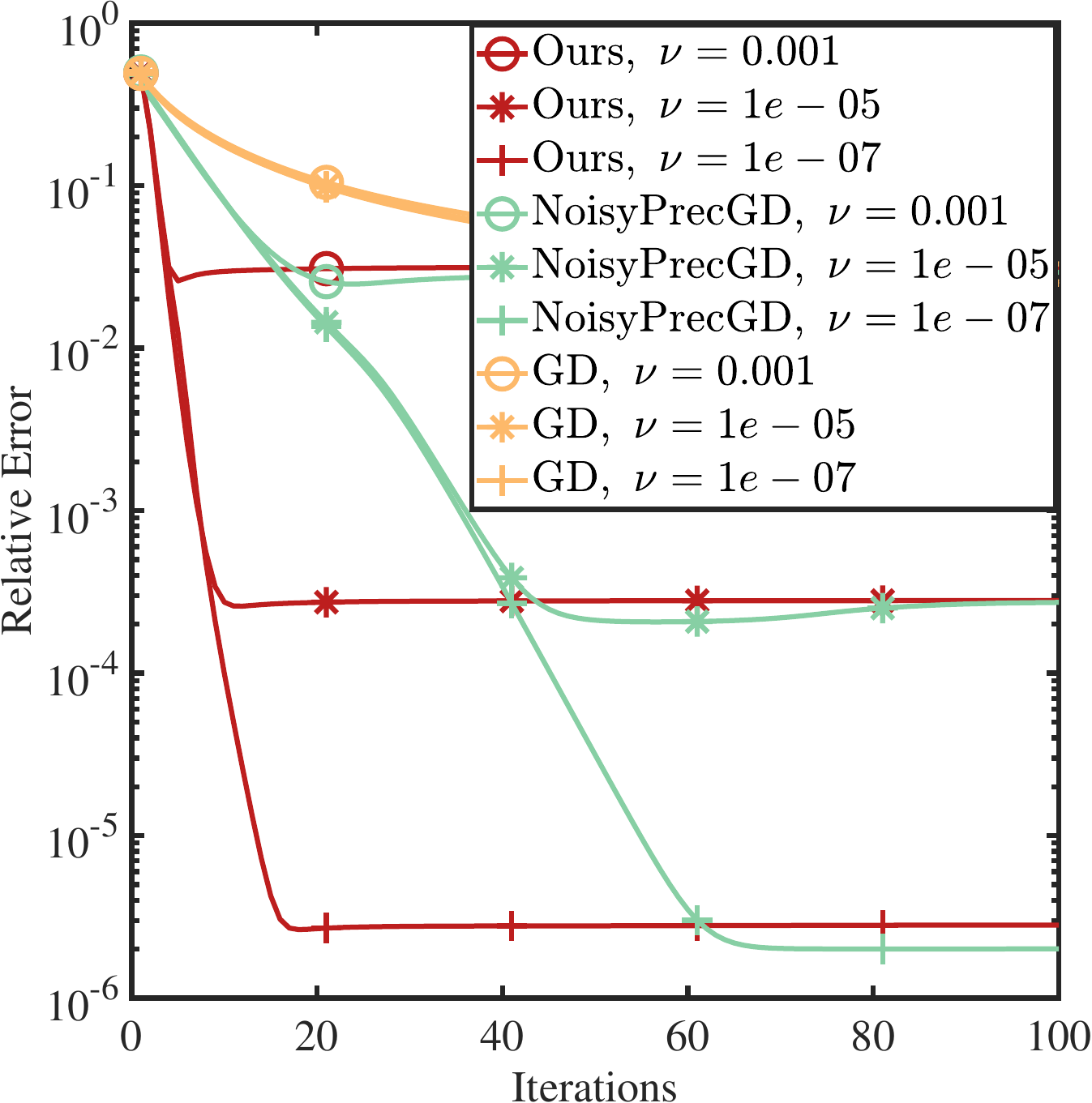}
\end{minipage}%
}%
\subfigure[]{
\begin{minipage}[t]{0.25\linewidth}
\centering
\includegraphics[width=4.5cm,height=4.5cm]{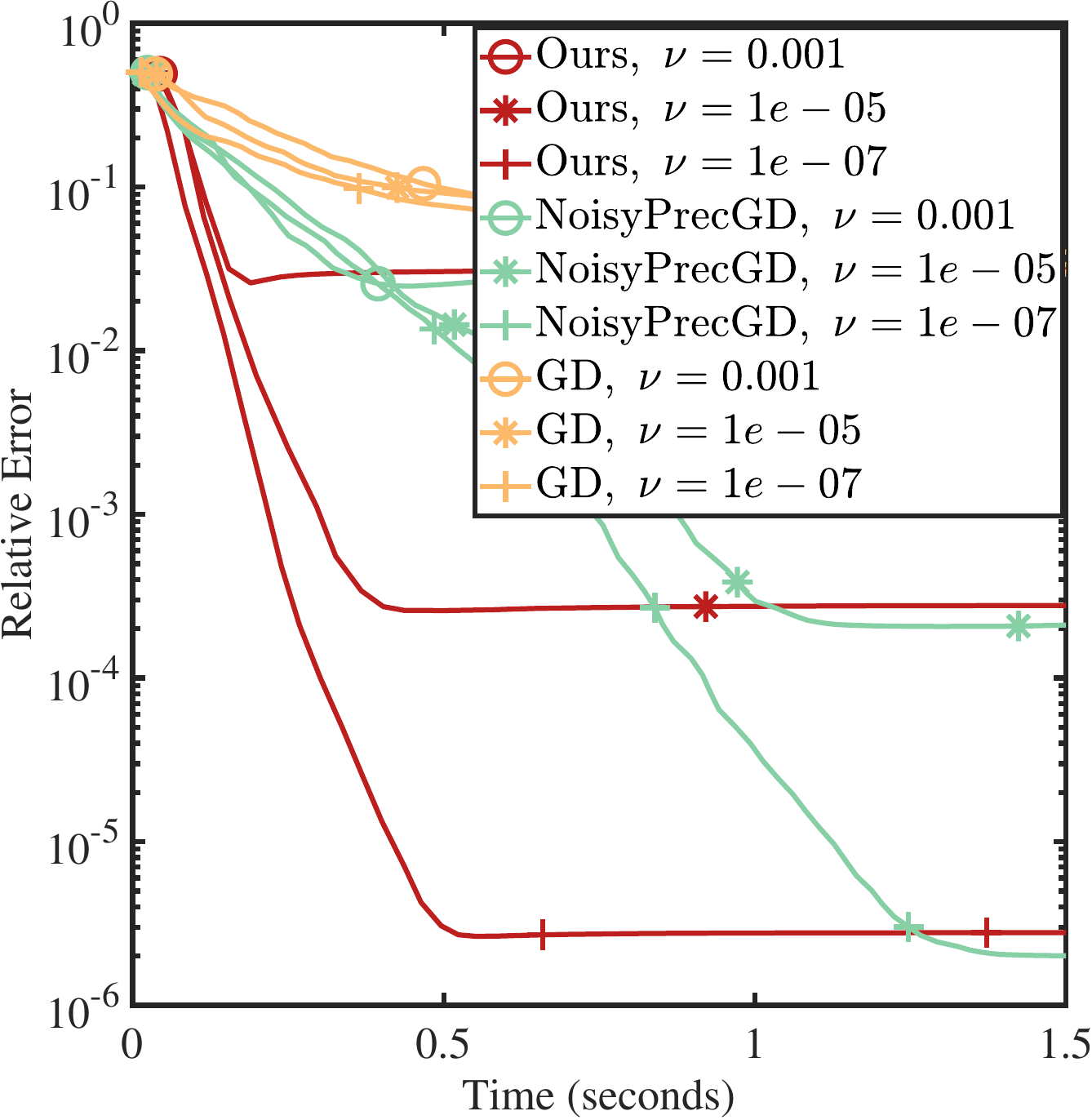}
\end{minipage}%
}%
\centering

\caption{Relative recovery error and computation time of NoisyPrecGD, GD, and APGD on the over-rank noisy matrix sensing problem, where $n_1 = n_2 = 20$, $r_\star =5,\ r=2r_\star$, and $m = 10n_1r$. The step sizes for each method are tuned to achieve the fastest convergence: APGD uses a step size of 1, while GD uses a step size of 0.5 and NoisyPrecGD uses a step size of 0.7. Subfigures (a) and (b) correspond to a condition number of 1, while (c) and (d) correspond to a condition number of 100.
}
\label{fig:2}
\end{figure*}

\textbf{Comparison with GD and NoisyPrecGD} 
Figures \ref{fig:1} and \ref{fig:2} show the relative recovery error and computation time of different methods under varying ranks $r$ and condition numbers $\kappa$. Compared to NoisyPrecGD and GD, APGD exhibits a significantly faster convergence rate. Although each iteration of APGD involves recomputing the gradient and thus incurs a higher per-iteration cost, its overall computation time is still lower than that of the other two methods. Moreover, both NoisyPrecGD and APGD are unaffected by the condition number and over-parameterization, whereas GD is sensitive to both, highlighting the effectiveness of preconditioning.

\textbf{Evaluating the robustness of step size} We evaluate the robustness of the three methods to step size in the noiseless setting and show that APGD exhibits the strongest robustness. As shown in Figure \ref{fig:3}, when the step size is small, APGD and PrecGD perform similarly; however, as the step size increases, APGD converges faster, while PrecGD and GD diverge when the step size exceeds 0.8.

\textbf{Comparison with \cite{ding2022validation}} 
In Figure \ref{fig:4}, we compare APGD with GD using small random initialization, as \cite{ding2022validation} demonstrated that GD with small random initialization can converge to the optimal error. As shown in Figure \ref{fig:4}, in the exact-rank setting, APGD and GD with small initialization achieve similar recovery errors. In the over-parameterized case, GD yields slightly lower recovery error than APGD. However, as noted in previous work \cite{zhang2024fast}, when $r = \mathcal{O}(r_\star)$, the recovery errors of both methods can be considered of the same order. Moreover, GD requires 100 times more iterations than APGD.
Therefore, APGD is more practical due to its faster convergence and tolerable recovery error.

\subsection{Experiments for more general cases}
\label{extension experiments}

\subsubsection{Weighted low-rank matrix factorization} 
The weighted PCA problem is defined as recovering the rank-$r_\star$ matrix $X_\star\in\mathbb{R}^{n_1 \times n_2}$ from the observation $O=W \odot X_\star$, where $W$ denotes the knowing weight matrix. We can solve this problem by minimizing the following objective function using Burer–Monteiro factorization:
\begin{equation}
\underset{L \in \mathbb{R}^{n_1 \times r},\ R \in \mathbb{R}^{n_2 \times r}}{\operatorname{minimize}} \frac{1}{2} ||  W\odot (LR^\top - X_\star) ||_F^2.
\notag 
\end{equation}
As shown in \cite{li2019non}, when the condition $\frac{\max W_{ij}^2}{\min W_{ij}^2 } \le 1.5$ holds, the objective function has no spurious local minima. In this experiment, we relaxed the condition and generated weight matrices with $\frac{\max W_{ij}^2}{\min W_{ij}^2} = 4$ for our simulation experiments. As shown in Figure \ref{fig:5}, under different condition numbers, APGD demonstrates faster convergence rates and shorter computation times compared to the other two methods.

\begin{figure}[h]
\centering
\subfigure[]{
\begin{minipage}[t]{0.48\linewidth}
\centering
\includegraphics[width=4.3cm,height=4.3cm]{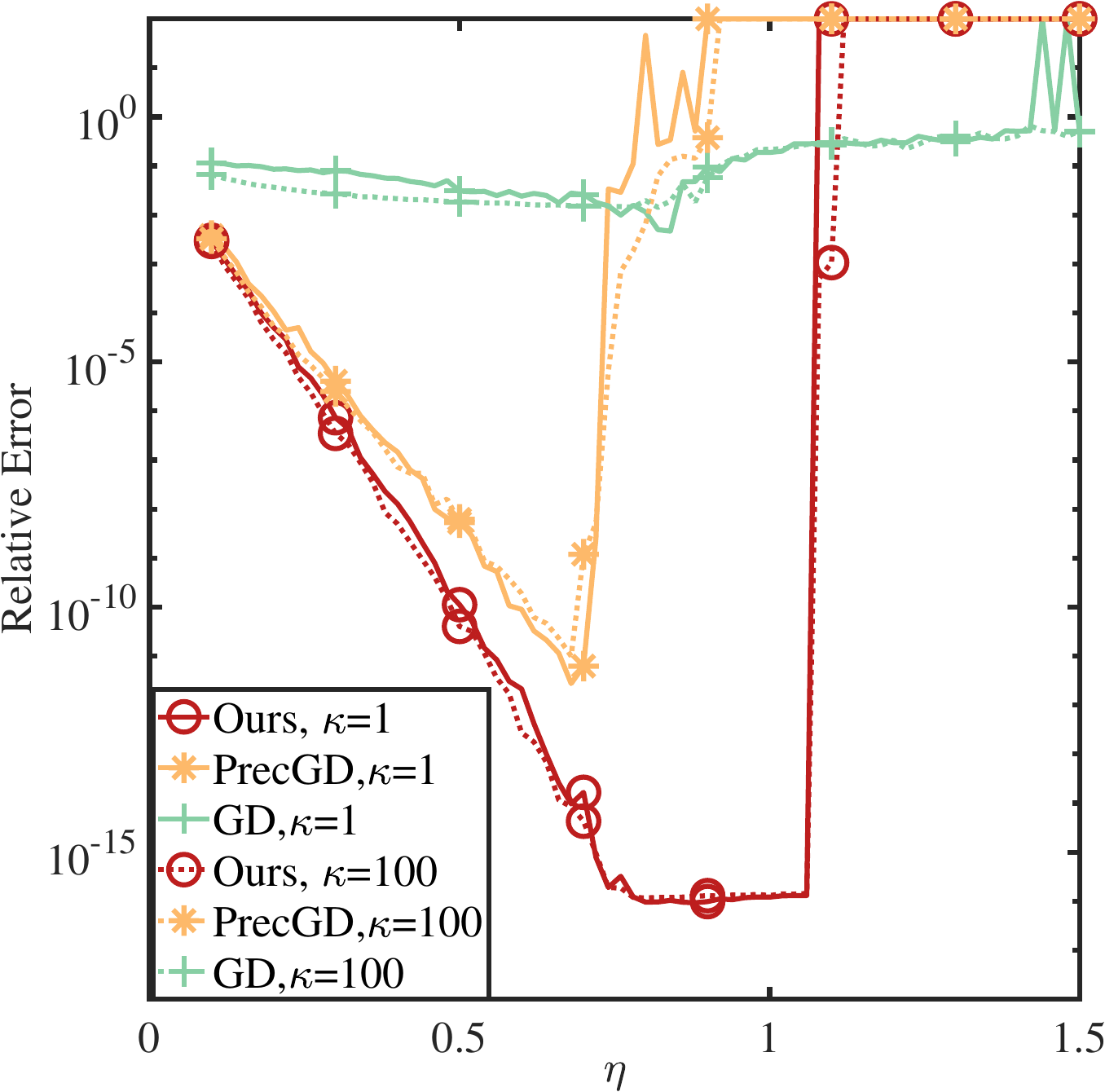}
\end{minipage}%
}
\subfigure[]{
\begin{minipage}[t]{0.48\linewidth}
\centering
\includegraphics[width=4.3cm,height=4.3cm]{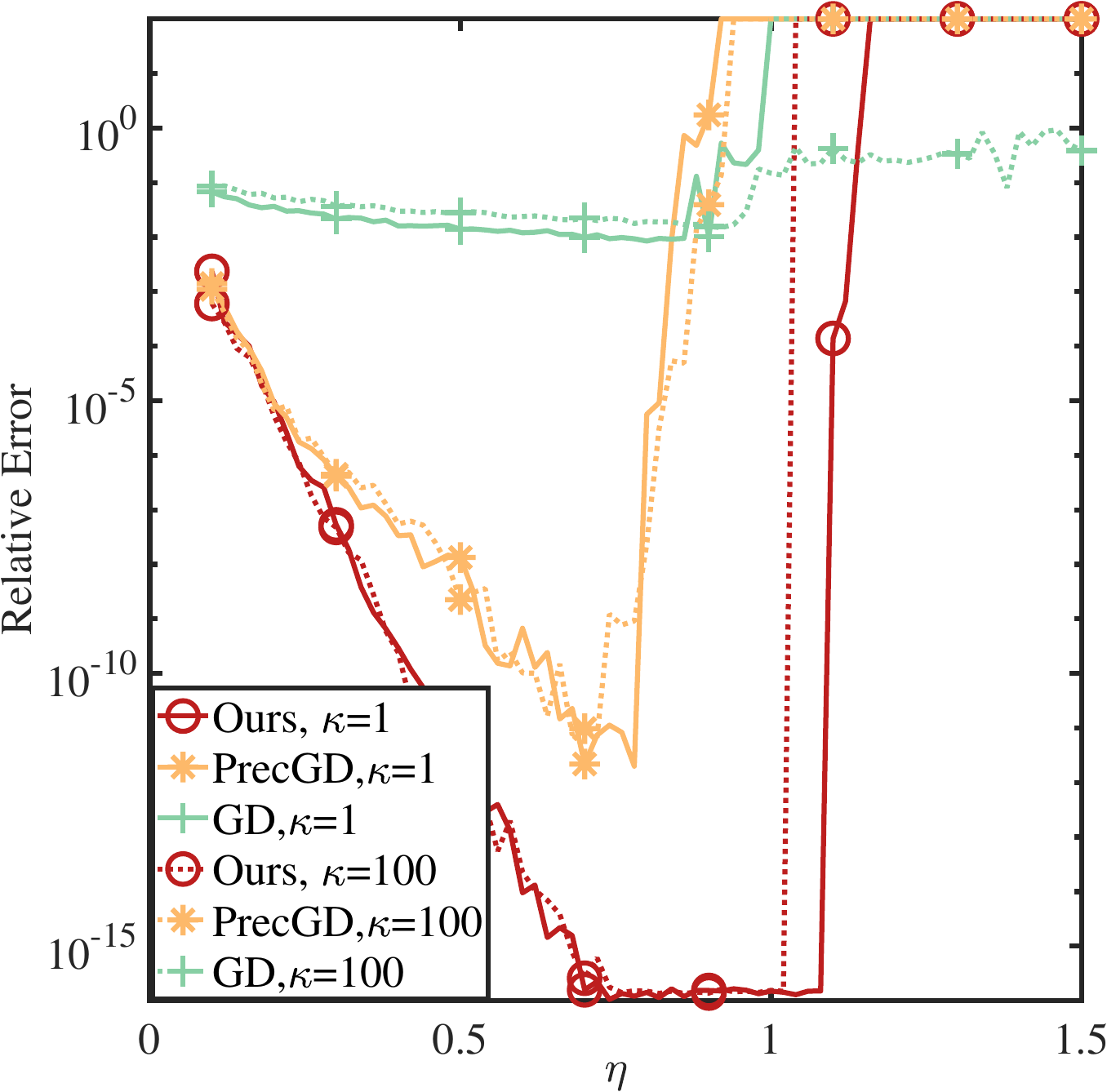}
\end{minipage}%
}
\centering
\caption{The relative error of APGD, PrecGD, and GD after 100 iterations with respect to different step size $\eta$ under different condition numbers for matrix sensing. $n=20,\ r_\star =5,\ m=10nr$. Subfigure (a) denotes the exact rank case with $r=r_\star$ while subfigure (b) denotes the over-rank case with $r=2r_\star$.  }
\label{fig:3}
\end{figure}

\begin{figure}[h]
\centering
\subfigure[]{
\begin{minipage}[t]{0.48\linewidth}
\centering
\includegraphics[width=4.3cm,height=4.3cm]{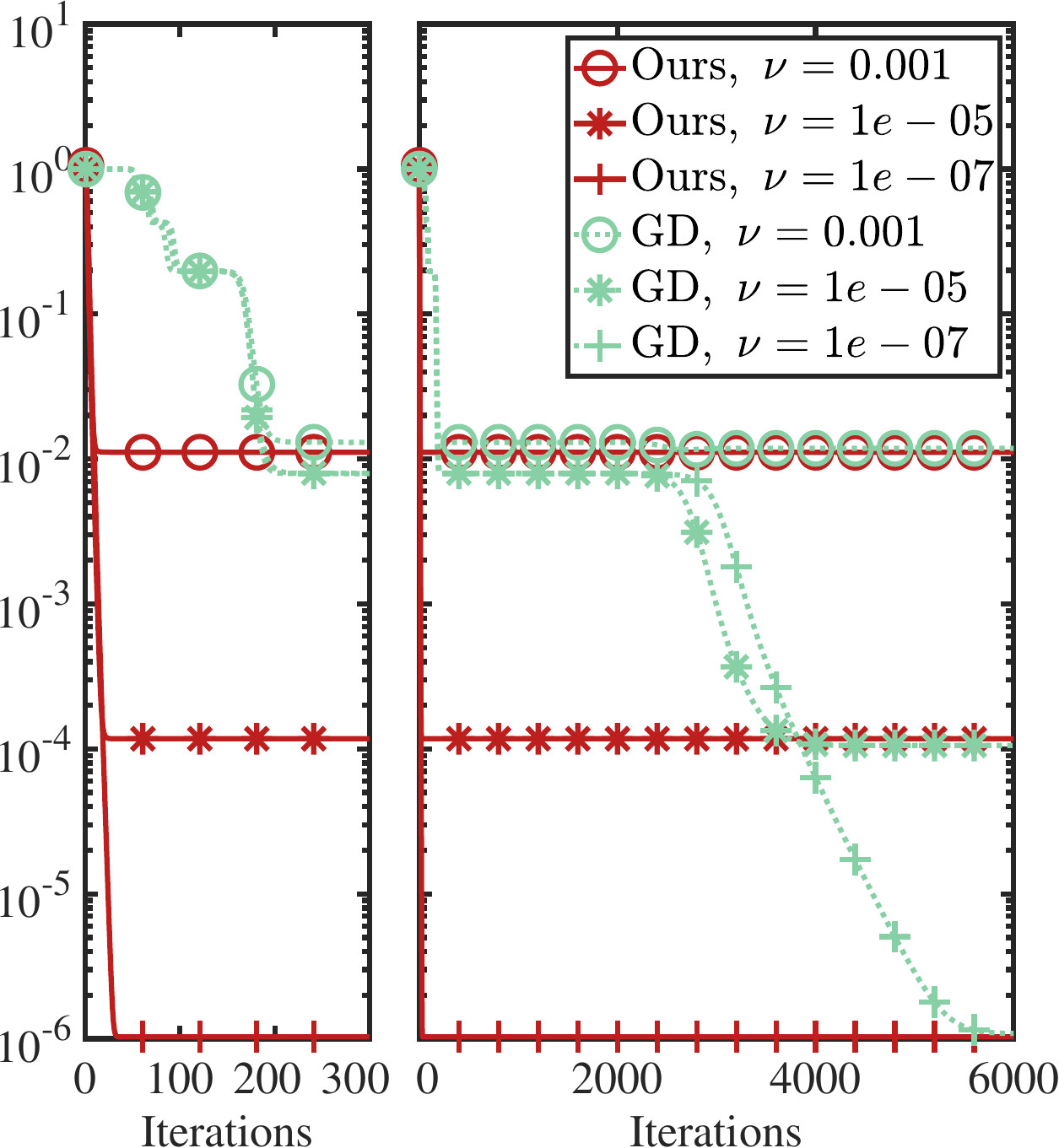}
\end{minipage}%
}
\subfigure[]{
\begin{minipage}[t]{0.48\linewidth}
\centering
\includegraphics[width=4.3cm,height=4.3cm]{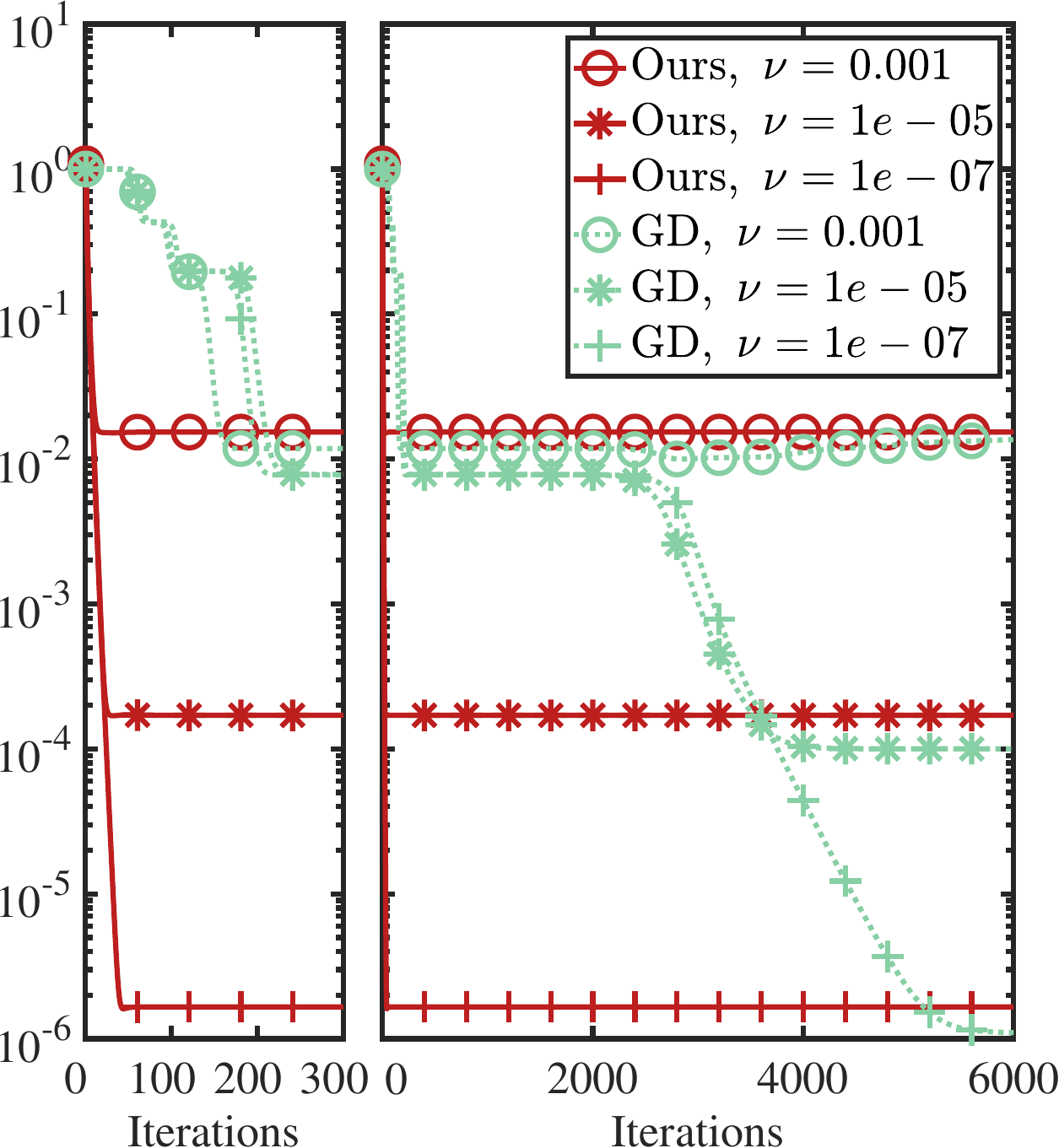}
\end{minipage}%
}
\centering
\caption{Recovery error of APGD (with spectral initialization) and GD with small initialization under different noise levels, where $n_1 = n_2 = 20$, $r_\star = 5$, $m = 10n_1r$, and $\kappa = 100$. The step size for APGD is 1, and for GD is 0.5. Subfigure (a) corresponds to the exact-rank setting, while subfigure (b) shows the over-parameterized case. In each subfigure, the left plot shows the first 300 iterations, and the right plot shows all 6000 iterations.
  }
\label{fig:4}
\end{figure}

\begin{figure}[h]
\centering
\subfigure[]{
\begin{minipage}[t]{0.48\linewidth}
\centering
\includegraphics[width=4.3cm,height=4.3cm]{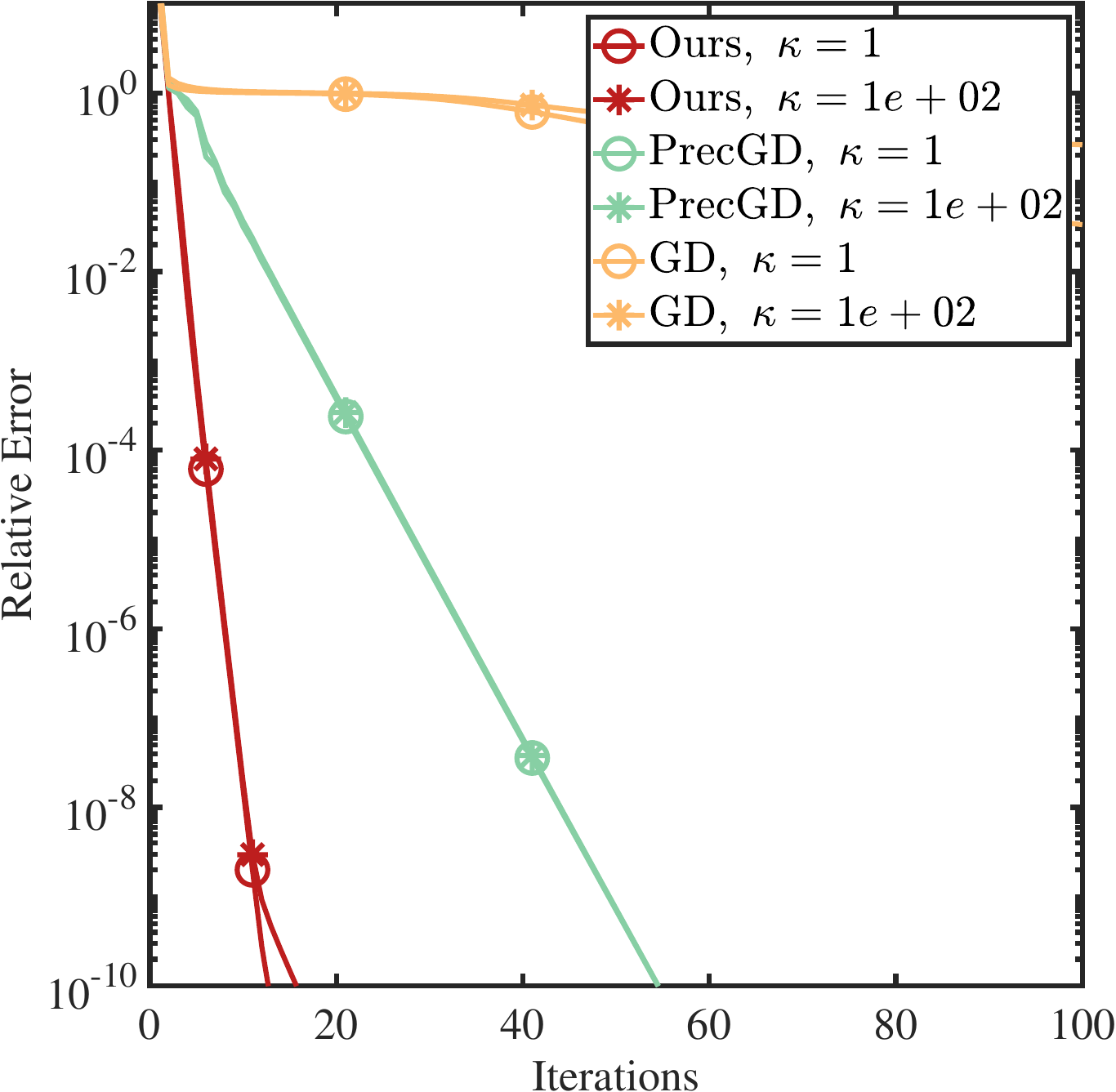}
\end{minipage}%
}
\subfigure[]{
\begin{minipage}[t]{0.48\linewidth}
\centering
\includegraphics[width=4.3cm,height=4.3cm]{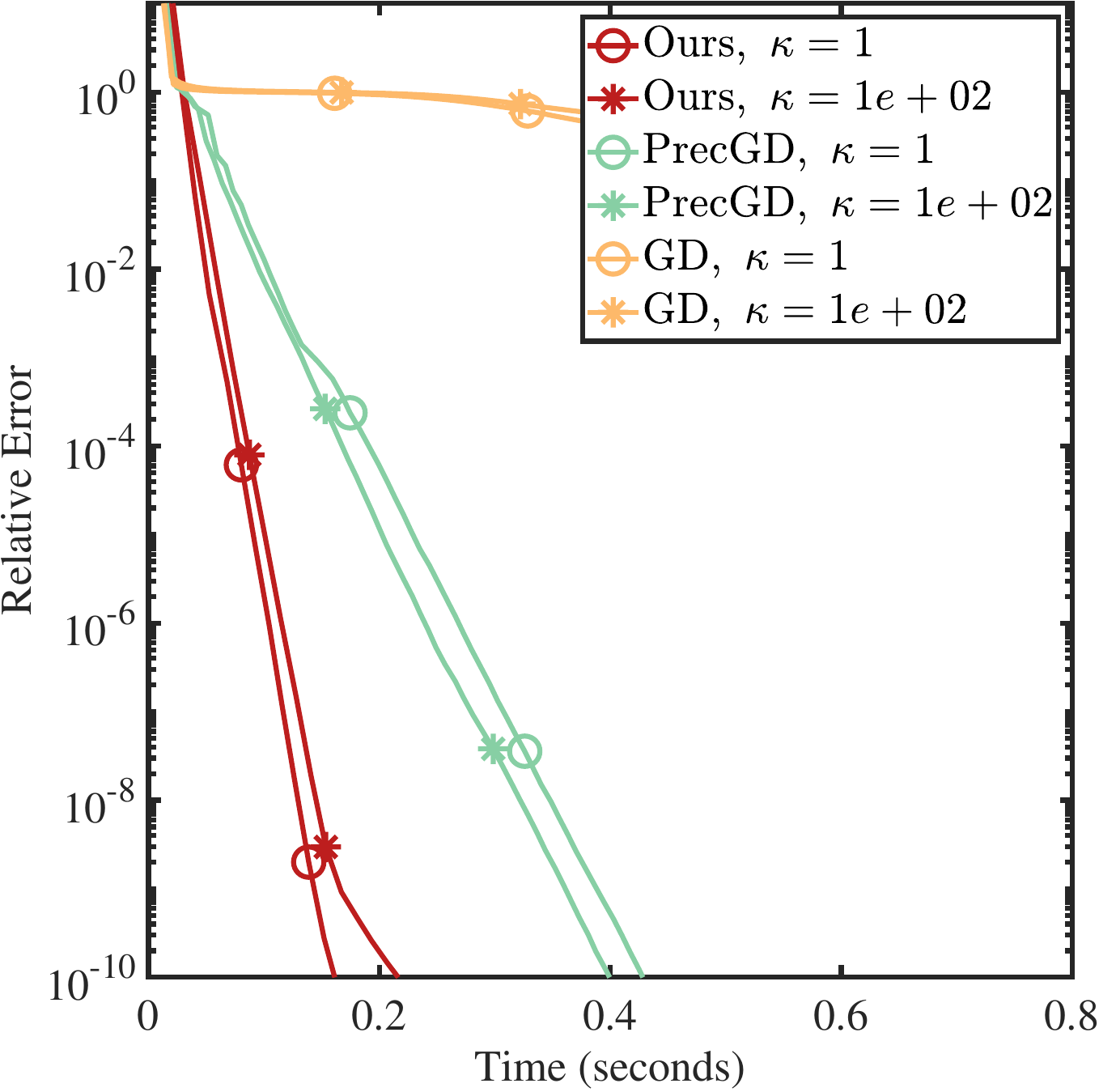}
\end{minipage}%
}
\centering
\caption{Experiments on the weighted PCA task with the following parameter settings: $n_1 = n_2 = 1000$, true rank $r_\star = 5$, estimated rank $r = 2r_\star$. The step size for APGD is set to $\eta = 0.9$, while for the other two methods it is set to $\eta = 0.5$. Subfigure (a) compares the recovery error of the three methods under varying condition numbers. Subfigure (b) presents the comparison of computation time.}
\label{fig:5}
\end{figure}

\subsubsection{1-bit matrix completion}
The 1-bit matrix completion problem is defined as recovering a rank-$r_\star$ matrix $X_\star$ from the 1-bit observation $X_{ij}$ where $X_{ij}=1$ with  probability $\sigma(X_\star)$  and $X_{ij}=0$ with probability $1-\sigma(X_\star)$ and $\sigma(\cdot)$ denotes the sigmoid function. 
After a number of measurements have been taken, define $\alpha_{ij}$ as the fraction of observations in which the $(i,j)$-th entry equals 1. Then we can recover $X_\star$ by minimizing the following objective function:
\begin{equation}
\underset{L \in \mathbb{R}^{n_1 \times r},\ R \in \mathbb{R}^{n_2 \times r}}{\operatorname{minimize}} \sum_{i=1}^{n_1} \sum_{j=1}^{n_2} \left(\log (1+ e^{(LR^\top)_{ij}}) -\alpha_{ij} {(LR^\top)_{ij}} \right).
\label{equ:15}
\end{equation}

Following the setup in \cite{zhang2023preconditioned}, we assume that the number of observations $m$ is large enough so that $\alpha_{ij} = \sigma({(X_\star)}_{ij})$. Under this condition, the optimal solution to (\ref{equ:15}) is exactly $X_\star$. As shown in Figure \ref{fig:6}, APGD achieves faster convergence and lower computation time compared to the other two methods, across different condition numbers.

\begin{figure}[h]
\centering
\subfigure[]{
\begin{minipage}[t]{0.48\linewidth}
\centering
\includegraphics[width=4.3cm,height=4.3cm]{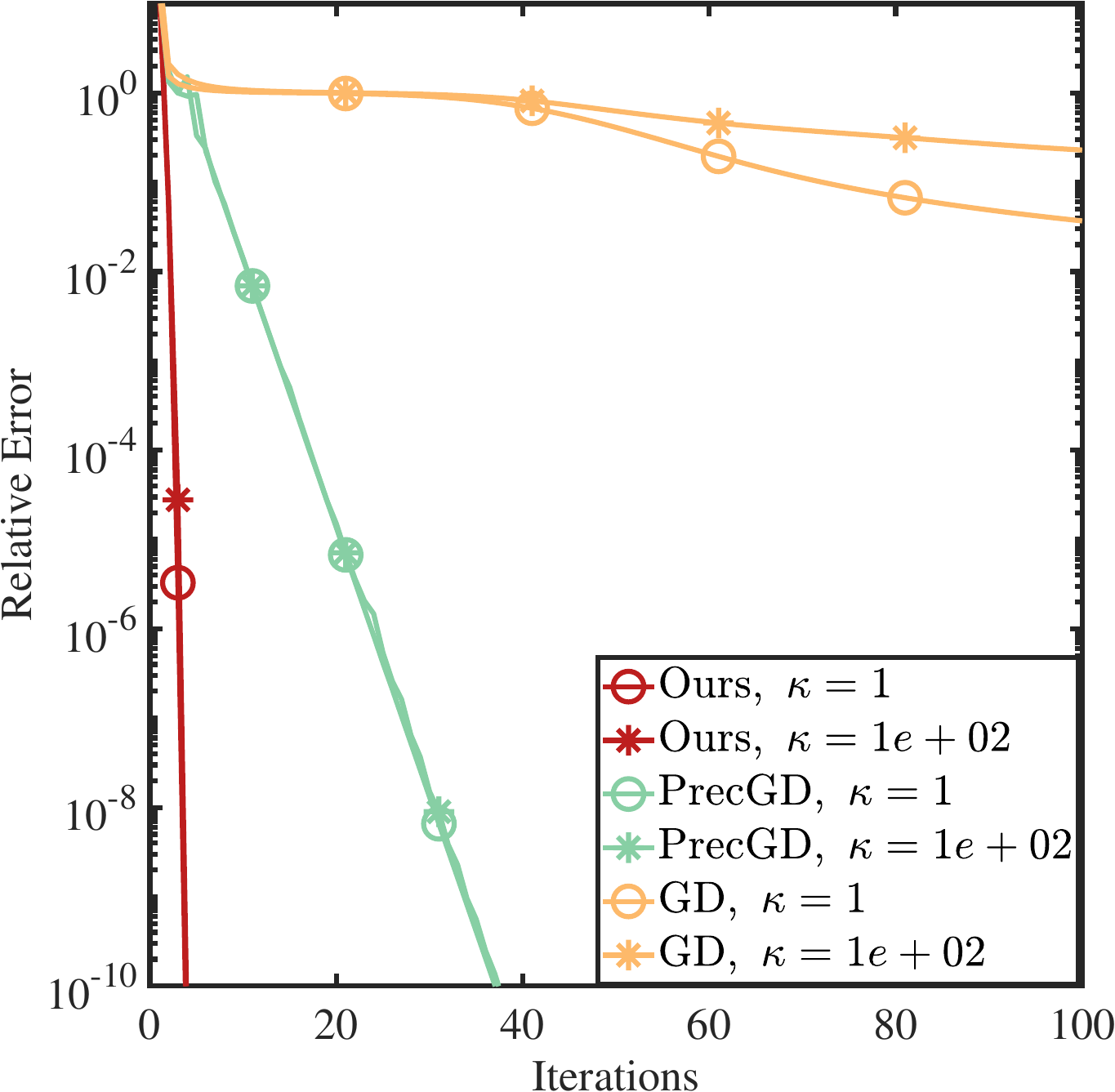}
\end{minipage}%
}
\subfigure[]{
\begin{minipage}[t]{0.48\linewidth}
\centering
\includegraphics[width=4.3cm,height=4.3cm]{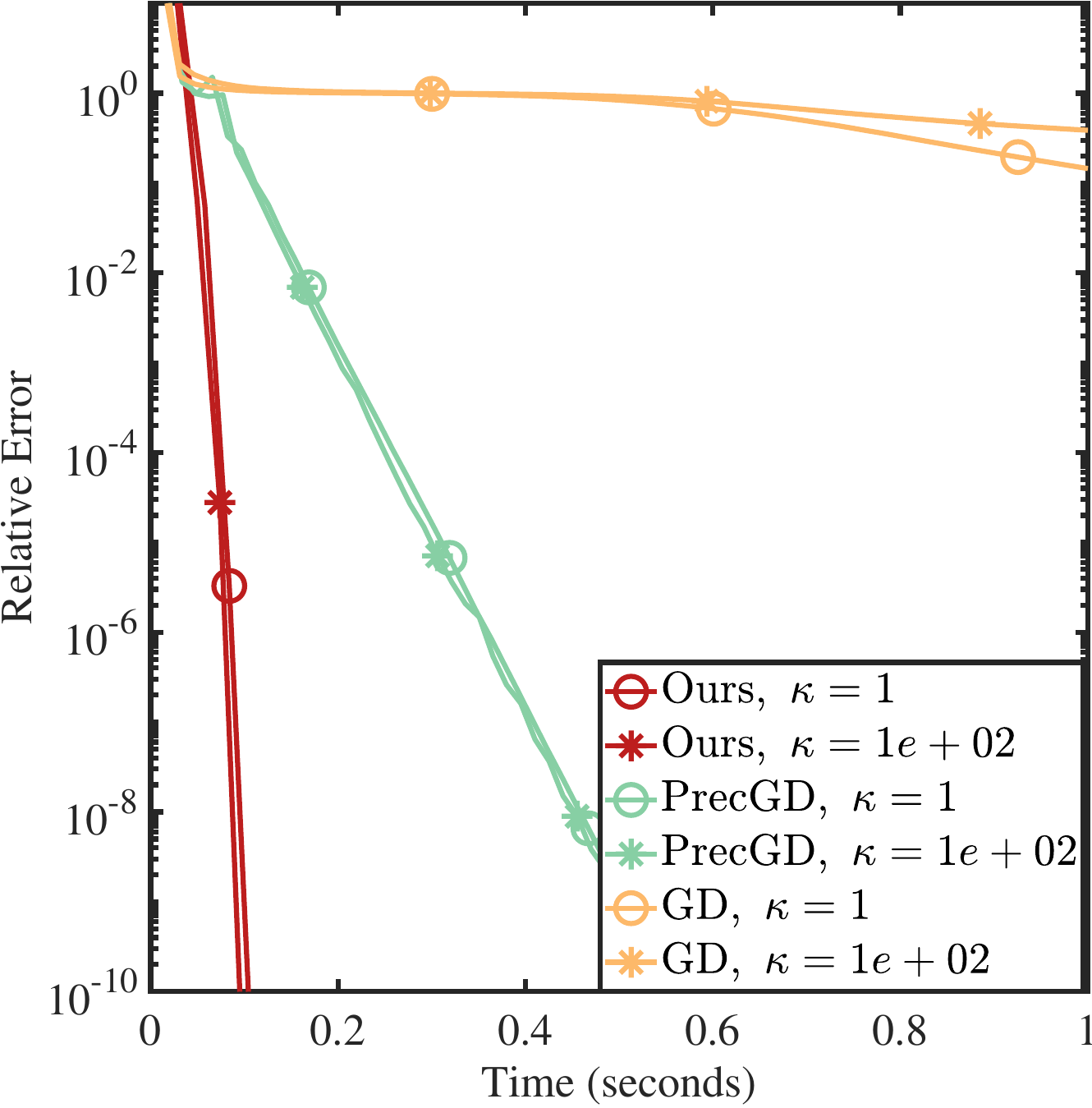}
\end{minipage}%
}
\centering
\caption{Experiments on the 1-bit matrix completion task with the following parameter settings: $n_1 = n_2 = 1000$, true rank $r_\star = 5$, estimated rank $r = 2r_\star$.  The step sizes for each method are tuned to achieve the fastest convergence: APGD uses a step size of 4, while GD uses a step size of 0.5 and NoisyPrecGD uses a step size of 3. Subfigure (a) compares the recovery error of the three methods under varying condition numbers. Subfigure (b) presents the comparison of computation time.
}
\label{fig:6}
\end{figure}

\subsubsection{low-rank matrix completion}

In this section, we conduct real data experiments to verify the effectiveness of APGD. Specifically, similar to the work of Zhang et al. \cite{zhang2024fast}, we perform noisy matrix completion experiments on multispectral images. The noisy matrix completion problem is defined as recovering the ground-truth matrix \( X_\star \) from partial noisy observations \( \mathcal{P}_\Omega(X_\star+ S) \), where 
$$
\mathcal{P}_{\Omega}\left( X \right)_{ij} = \left\{ 
\begin{array}{ll}
    X_{ij}, & \text{if } (i,j) \in \Omega \\ 
    0, & \text{otherwise}
\end{array} 
\right.
$$
and $S$ denotes the Gaussian noise, and \( \Omega \) is generated according to a Bernoulli model, meaning that each entry \((i, j) \in \Omega\) is independently selected with probability \( p \).

Based on the Burer–Monteiro factorization, our optimization problem is formulated as 
\begin{equation}
    \underset{L \in\mathbb{R}^{n_1\times r},\ R \in\mathbb{R}^{n_2\times r}}{\arg \min } \frac{1}{2p} \|\mathcal{P}_\Omega(LR^\top - M)\|_F^2,
\end{equation}
where $M=X_\star + S$.
We can also apply APGD to solve this problem. Here, we use a single spectral band of a multispectral image from the CAVE dataset \cite{CAVE_0293}, with a size of \( 512 \times 512 \). First, we approximate the image with a low-rank matrix of rank 50. For NoisyPrecGD, spectral initialization is applied, while for GD and ScaledGD$(\lambda)$, small random initializations are used as required in the original text. Although spectral initialization is theoretically required for APGD, in practice it is not necessary. Therefore, we adopt random initialization to better highlight the effectiveness of APGD.   All methods are run for only 5 iterations. We use the Signal-to-Noise Ratio (SNR) to measure the level of the noise \( S \), and then evaluate the recovery performance using the Peak Signal-to-Noise Ratio (PSNR), which is displayed below each image.

\textbf{Experiments with different rank $r$}
We begin by evaluating the recovery performance of APGD when transitioning from the exact rank case to the over-parameterized rank case. From Figure \ref{fig:real_verify_rank}, we can observe that APGD successfully recovers the true image in both the exact rank and over-parameterized rank scenarios, even when starting from a random initialization. In contrast, other methods, such as GD and ScaledGD(\(\lambda\)), fail to recover the image. Although NoisyPrecGD also manages to recover the true image, its performance is inferior to that of APGD, and it requires spectral initialization to achieve reasonable results.

\textbf{Experiments with different sampling rate $p$}
We compared the performance of various methods under different sampling rates. As shown in Figure \ref{fig: real_verify_missing_rate}, APGD is capable of approximately recovering the original image even at a low sampling rate ($p=0.2$), whereas other methods failed. As the sampling rate increases, the recovery quality improves significantly.

\begin{figure}[h]
\vskip 0.2in
\begin{center}
\centerline{\includegraphics[width=9cm,height=8cm]{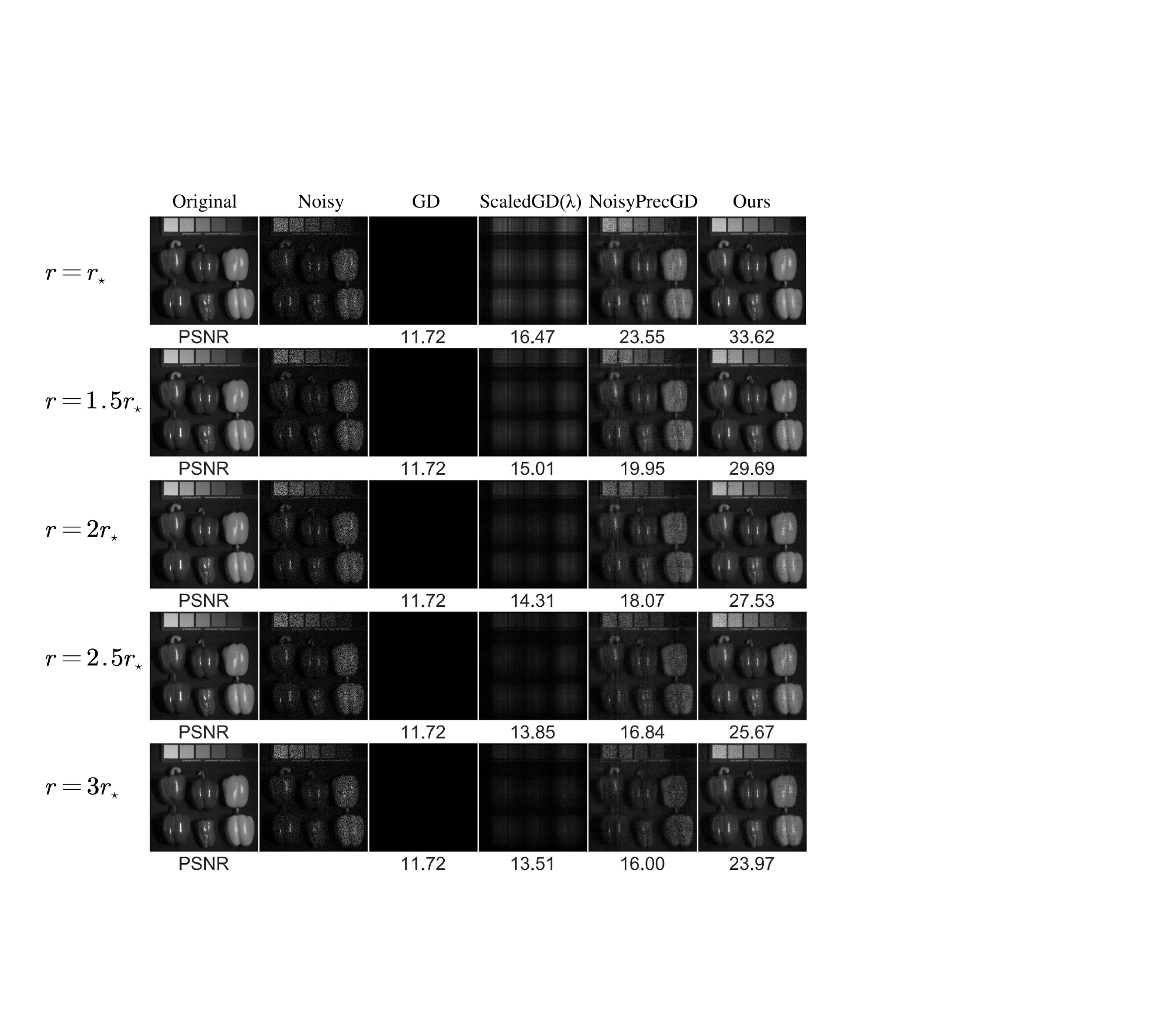}}
\caption{Compare the recovery performance of different algorithms under various over-parameterized ranks $r$, where the SNR of noise is 30.}
\label{fig:real_verify_rank}
\end{center}
\vskip -0.2in
\end{figure}

\begin{figure}[h]
\vskip 0.2in
\begin{center}
\centerline{\includegraphics[width=9cm,height=8cm]{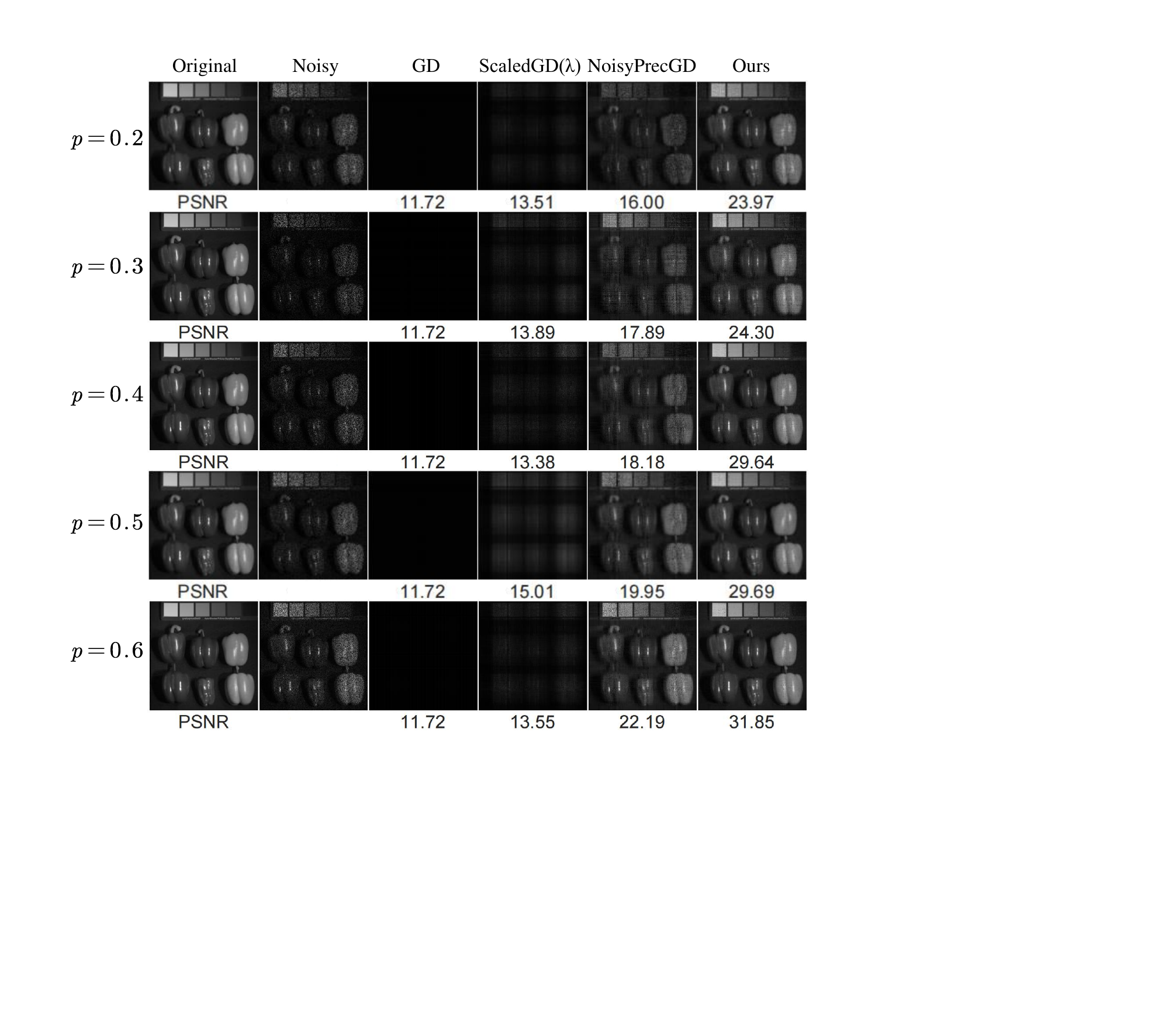}}
\caption{Compare the recovery performance of different algorithms under various sampling rate $p$, where the SNR of noise is 30.}
\label{fig: real_verify_missing_rate}
\end{center}
\vskip -0.2in
\end{figure}

\section{Conclusion}
To deal with the noisy matrix sensing problem, we introduce the APGD algorithm, which could accelerate convergence rate compared to vanilla gradient descent, particularly in the scenarios with large condition numbers and over-parameterization. Both theoretical analysis and empirical studies are conducted to show that APGD achieves near-optimal recovery error at a linear rate. A major strength of APGD is that it removes the need for the damping term used in earlier preconditioning techniques, thus simplifying implementation by avoiding complex parameter tuning. In addition, APGD is stable across a wide range of step sizes and supports larger steps, making it substantially faster than the existing alternatives. Beyond noisy matrix sensing, we demonstrate that APGD is also applicable to a variety of low-rank matrix estimation problems. Precisely, When the loss function satisfies certain geometric conditions, APGD maintains the same linear convergence behavior as that for noisy matrix sensing. A series of experiments are conducted on both synthetic and real-world datasets, including weighted PCA, 1-bit matrix completion, and matrix completion, further validate the efficiency and flexibility of APGD.

\bibliographystyle{IEEEtran}
\bibliography{reference}
\appendix

\subsection{Preliminaries}
We begin by presenting a lemma that bridges the assumptions of Theorem 2 with those of Lemma 2 and Lemma 3.
\begin{lemma}
    Suppose that  we have $m\ge C_\delta \frac{v^2 (2r+1)n \log n }{ \sigma_{r_\star}(X_\star) \rho^2 \delta_{2r+1}^2}$ with constant $\delta_{2r+1}\le \frac{\rho}{8\kappa\sqrt{r_\star + r}},\ \rho\le \frac{1}{2}$. Then with probability at least $1-3 n^{-c_1} -2e^{-c_2 m\delta_{2r+1}} $, the following states holds:
    (1) the linear map $\A(\cdot)$ satisfies rank-$(2r+1) $ RIP with constant $\delta_{2r+1}$;
    (2) the noise terms \begin{equation}
    \|\A^*(s)R_t\|_{P_{R_t}^*}^2 \le \mathcal{E}_{opt},\ \  \|\A^*(s)L_t^\top\|_{P_{L_t}^*}^2 \le \mathcal{E}_{opt}, 
\notag \end{equation} where $\mathcal{E}_{opt}=C_e \frac{\nu^2rn\log n }{m}$ and $n=\max\{n_1,n_2\}$;
    (3) the initial point $X_0$ produced by algorithm \ref{algorithm} satisfies 
    $$
    ||X_0 -X_\star||_F\le \rho \sigma_{r_\star}(X_\star);
    $$

\label{lemma:4}
\end{lemma}
\begin{IEEEproof}
First, according to Theorem 2.3 in \cite{candes2011tight}, if $m \ge \mathcal{O}((2r+1)n / \delta_{2r+1}^2)$, then the operator $\mathcal{A}(\cdot)$ satisfies the rank-$(2r+1)$ RIP with probability at least $1 - e^{-c_2 m \delta_{2r+1}^2}$.

Then for the noise term, with probability at least $1-3n^{-c_1}$, we have
\begin{equation}
\begin{aligned}
\|\A^*(s)R_t\|_{P_{R_t}^*}^2 &= \left \| \left(\sum_{i=1}^m s_i A_i \right) R_t(R_t^\top R_t)^{\dagger/2} \right \|_F^2 \\
&\le \left \| \sum_{i=1}^m s_i A_i \right \|_2^2 \left \| R_t(R_t^\top R_t)^{\dagger/2} \right\|_F^2 \\
&\overset{(a)}{\le}  r \left\| \sum_{i=1}^m s_i A_i \right\|_2^2 \  \overset{(b)}{\le} C_e \frac{\nu^2rn\log n }{m} = \mathcal{E}_{opt},
\end{aligned}
\notag
\end{equation}
where $(a)$ uses the fact that $\left \| R_t(R_t^\top R_t)^{\dagger/2} \right\|_F^2 = \sum_i^r \frac{\sigma_i^2(R_t)}{\sigma_i^2(R_t)}=r$  ; (b) follows from the Lemma 16 in \cite{zhang2021preconditioned}.
The upper bound of $\|\A^*(s)L_{t}^\top\|_{P_{L_t}^*}^2$ can be obtained using a similar method. Combining the first two terms and following the proof of Proposition 23 in \cite{zhang2021preconditioned}, we can conclude that the third term also holds.
\end{IEEEproof}

Then we present a lemma related to RIP, which will be important for the proofs that follow.
\begin{lemma}
Suppose that the linear map $\A(\cdot)$ satisfies the rank$-(2r+1)$ RIP with constant $\delta_{2r+1}$, then we have
\begin{equation}
|| (\mathcal{I}-\A^*\A)(X) ||_F \le \delta_{2r+1}\sqrt{2r} ||X||_F
\notag
\end{equation}
for any matrix $X$ with rank $2r$.
\label{lemma:RIP}
\end{lemma}

\begin{IEEEproof}

This lemma extends Lemma 7.3 from \cite{stoger2021small}, and its proof follows directly by incorporating the norm inequality $|| X ||_F \le \sqrt{2r}||X ||$ into the original argument.
\end{IEEEproof}

\subsection{Proof of Lemmas \ref{Lemma: noiseless lipschitz} and \ref{lemma:lipschitz}}
\label{Proof of Lipschitz-like inequality}
\begin{IEEEproof}
Based on the update rule of APGD, we have
\begin{equation}
\begin{aligned}
&f_c(L_{t+1},R_t) = \frac{1}{2}\| \mathcal{A}(L_{t+1}R_t^\top-X_\star)\|_2^2 \\
&= \frac{1}{2}\| \mathcal{A}(L_{t}-\eta \nabla_L f(L_{t},R_t)(R_t^\top R_t)^\dagger )R_t^\top-X_\star)\|_2^2 \\
&= \frac{1}{2} \left \langle \mathcal{A}(L_tR_t^\top-X_\star)- \eta \mathcal{A}(\nabla_L f(L_{t},R_t)(R_t^\top R_t)^\dagger R_t^\top) \right \rangle \\
& - \eta \mathcal{A}(\nabla_L f(L_{t},R_t)(R_t^\top R_t)^\dagger R_t^\top) , \mathcal{A}(L_tR_t^\top-X_\star)\\
& =\underbrace{\frac{1}{2}\| \mathcal{A}(L_tR_t^\top-X_\star) \|_2^2}_{f_c(L_t,R_t)} + \underbrace{\frac{\eta ^2}{2} \|\mathcal{A}(\nabla_L f(L_{t},R_t)(R_t^\top R_t)^\dagger R_t^\top)\|_2^2}_{Z_1} \\
&-\underbrace{\eta\langle \mathcal{A}(L_tR_t^\top-X_\star), \mathcal{A}(\nabla_L f(L_{t},R_t)(R_t^\top R_t)^\dagger R_t^\top)\rangle }_{Z_2}. 
\end{aligned}
\notag
\end{equation}
For $Z_1$, we have
\begin{equation}
\begin{aligned}
&Z_1 \overset{(a)}{\le} \frac{\eta^2(1+\delta_{2r+1})}{2}\|\nabla_L f(L_{t},R_t)(R_t^\top R_t)^\dagger  R_t^\top\|_F^2 \\
& \overset{(b)}{\le} \frac{\eta^2(1+\delta_{2r+1})}{2} \|\A^*(\A(L_tR_t^\top)-y)R_t(R_t^\top R_t)^{\dagger/2} \|_F^2  \\ 
& = \frac{\eta^2(1+\delta_{2r+1})}{2} \|\A^*(\A(L_tR_t^\top-X_\star)-s) R_t(R_t^\top R_t)^{\dagger/2} \|_F^2 \\
& \le \frac{\eta^2(1+\delta_{2r+1})}{2} \|\A^*(\A(L_tR_t^\top-X_\star)) R_t(R_t^\top R_t)^{\dagger/2} \|_F^2 \\
& \ \ \ \ + \frac{\eta^2(1+\delta_{2r+1})}{2} \|\A^*(s) R_t(R_t^\top R_t)^{\dagger/2} \|_F^2 \\
& = \frac{\eta^2(1+\delta_{2r+1})}{2}  \| \underbrace{\A^*(\A(L_tR_t^\top-X_\star)) R_t}_{\nabla_L f_c(L_t,R_t)} \|_{P_{R_t}^*}^2 \\
&\ \ \ \ + \frac{\eta^2(1+\delta_{2r+1})}{2}\| \A^*(s) R_t \|_{P_{R_t}^*}^2  ,
\end{aligned}
\notag
\end{equation}
where $(a)$ follows the assumption that $\A(\cdot)$ satisfies the rank-$(2r+1)$ RIP; $(b)$ uses the fact that $\|AB\|_F\le \|A\|_F\|B\|_2$.

For $Z_2$, we have
\begin{equation}
\begin{aligned}
Z_2 &= \eta \langle \nabla_Lf(L_{t+1},R_t)(R_t^\top R_t)^\dagger  R_t^\top ,  \A^*\mathcal{A}(L_tR_t^\top-X_\star)\rangle\\
& = \eta\langle  \A^*\mathcal{A}(L_tR_t^\top-X_\star) R_t(R_t^\top R_t)^\dagger  R_t^\top, \A^*\A (L_tR_t^\top-X_\star )\rangle \\
& - \langle \A^*(s)R_t(R_t^\top R_t)^\dagger  R_t^\top, \A^*\A (L_tR_t^\top-X_\star )\rangle \\
& = \eta\|\nabla_L f_c(L_t,R_t)\|^2_{P_{R_t}^* } \\
& - \eta\langle  \A^*(s)R_t(R_t^\top R_t)^\dagger  R_t^\top, \A^*\A (L_tR_t^\top-X_\star )\rangle \\
&=\eta\|\nabla_L f_c(L_t,R_t)\|^2_{P_{R_t}^* } \\
& - \eta\langle  \A^*(s)R_t(R_t^\top R_t)^{\dagger/2}  , \A^*\A (L_tR_t^\top-X_\star )R_t(R_t^\top R_t)^{\dagger/2} \rangle \\
&\ge \eta\|\nabla_L f_c(L_t,R_t)\|^2_{P_{R_t}^* } \\
& - \eta\|\nabla_L f_c(L_t,R_t)\|_{P_{R_t}^* } \|\A^*(s)R_t\|_{P_{R_t}^* }.
\end{aligned}
\notag
\end{equation}
Combining the bounds for $Z_1$ and $Z_2$, we get
\begin{equation}
\begin{aligned}
& f_c(L_{t+1},R_t)  \le f_c(L_t,R_t) \\
& + \frac{\eta^2(1+\delta_{2r+1})}{2} \left( \| \nabla_L f_c(L_t,R_t) \|_{P_{R_t}^*}^2 + \| \A^*(s) R_t \|_{P_{R_t}^*}^2  \right) \\
& -\eta\|\nabla_L f_c(L_t,R_t)\|^2_{P_{R_t}^* } + \eta\|\nabla_L f_c(L_t,R_t)\|_{P_{R_t}^* } \|\A^*(s)R_t\|_{P_{R_t}^* }\\
&\overset{(a)}{\le} f_c(L_t,R_t) \\
&\ \ \ \ \ \  - \underbrace{\left( \eta - \frac{\eta}{3}(1 + 2\eta (1+\delta_{2r+1})  ) \right)}_{C_2} \|\nabla_L f_c(L_t,R_t)\|^2_{P_{R_t}^* },
\end{aligned}
\notag
\end{equation}
where $(a)$ uses the assumption that $\| \nabla_L f_c(L_t,R_t) \|_{P_{R_t}^*}\ge 3 \| \mathcal{A}^*(s)R_t \|_{P_{R_t}^*}$.

Similarly, for $f_c(L_{t+1},R_{t+1})$, we can also deduce that
\begin{equation}
\begin{aligned}
&f_c(L_{t+1},R_{t+1})  \le f_c(L_{t+1},R_t) \\
&\ \ \ \ \ - \left(\eta - \frac{\eta}{3}(1 + 2\eta(1+\delta_{2r+1})  )\right)  \| \nabla_R f_c(L_{t+1},R_t) \|^2_{P_{L_{t+1}}^*}.
\end{aligned}
\notag
\end{equation}
Therefore, we complete the proof of Lemma \ref{lemma:lipschitz}. 
As for Lemma \ref{Lemma: noiseless lipschitz}, by setting the noise $\textbf{s}
 = 0$, we can directly derive Lemma \ref{Lemma: noiseless lipschitz} as a special case of Lemma \ref{lemma:lipschitz}.

\end{IEEEproof}

\subsection{Proof of Lemma \ref{lemma: gradient dominance}}
\label{Proof of the gradient dominance}
\begin{IEEEproof}
Before proving this lemma, we first define the angle between the column space of $(L_t R_t - X_\star)^\top$ and the column space of $R_t$:
$$
\cos \theta_R^t = \frac{|| (L_t R_t - X_\star) R_t(R_t^\top R_t)^{\dagger/2} ||_F}{||L_t R_t - X_\star||_F}
.$$
Similarly, we define the angle between the column space of $(L_{t+1}R_t-X_\star)$ and the column space of $L_{t+1}$ as 
$$
\cos \theta_L^{t+1} = \frac{|| (L_{t+1} R_t - X_\star)^\top L_{t+1}(L_{t+1}^\top L_{t+1})^{\dagger/2} ||_F}{||L_{t+1} R_t - X_\star||_F}.
$$

Then, we relate $\cos \theta_R^t$ to $\|\nabla_L f_c (L_t,R_t)\|_{P_{R_t}^*}$.
\begin{equation}
\begin{aligned}
&\|\nabla_L f_c (L_t,R_t)\|_{P_{R_t}^*} =\|\A^*\A(L_tR_t^\top-X_\star)R_t(R_t^{\top}R_t)^{\dagger/2}\|_F \\
& \overset{(a)}{\ge} ||(L_tR_t^\top-X_\star)R_t(R_t^{\top}R_t)^{\dagger/2}||_F \\
&\ \ \ \ -||(\mathcal{I}- \A^*\A)(L_tR_t^\top-X_\star)R_t(R_t^{\top}R_t)^{\dagger/2} ||_F \\
&\overset{(b)}{\ge} ||(L_tR_t^\top-X_\star)R_t(R_t^{\top}R_t)^{\dagger/2}||_F\\
&\ \ \ \ -||(\mathcal{I}- \A^*\A)(L_tR_t^\top-X_\star)||_F ||R_t(R_t^{\top}R_t)^{\dagger/2}|| \\
&\overset{(c)}{\ge} ||(L_tR_t^\top-X_\star)R_t(R_t^{\top}R_t)^{\dagger/2}||_F \\
&\ \ \ \ - \sqrt{r+r_\star} \delta_{2r+1} || (L_tR_t^\top-X_\star) ||_F \\
&\overset{(d)}{=} (\cos\theta_R^t - \sqrt{r+r_\star} \delta_{2r+1} )  || (L_tR_t^\top-X_\star) ||_F,
\notag
\end{aligned}
\notag \end{equation}
where $(a)$ uses the norm triangle inequality; ($b$) uses the fact that $||AB||_F\le ||A||_F||B||$; $(c)$ uses the result form Lemma \ref{lemma:RIP} that 
$$
||(\mathcal{I}- \A^*\A)(L_tR_t^\top-X_\star)||_F \le \sqrt{r+r_\star} \delta_{2r+1} || L_tR_t^\top-X_\star ||_F;
$$(d) uses the definition of $\cos\theta_R^t$.

Using a similar argument, we can obtain 
\begin{equation}
\begin{aligned}
& \|\nabla_R f_c (L_{t+1},R_t)\|_{P_{L_{t+1}}^*} \\
&\ge (\cos\theta_L^{t+1} -\sqrt{r+r_\star} \delta_{2r+1} ) || (L_{t+1}R_t^\top-X_\star) ||_F.
\end{aligned}
\notag
\end{equation}

Then, we need to establish a lower bound for $\cos\theta_R^t$ and $\cos\theta_L^{t+1}$. However, directly bounding $\cos\theta_R^t$ and $\cos\theta_L^{t+1}$ from below is rather complicated. Our strategy is to first find an upper bound for \(\sin \theta\), and then use the identity \(\cos^2 \theta + \sin^2 \theta = 1\) to derive a lower bound for \(\cos \theta\).

According to the definitions of  $\cos\theta_R^t$ and $\cos\theta_L^{t+1}$, we have
\begin{equation}
\begin{aligned}
\sin\theta_R^t &= \frac{|| (L_t R_t^\top - X_\star) [I - R_t(R_t^\top R_t)^{\dagger}R_t^\top] ||_F}{||L_t R_t^\top - X_\star||_F},\\ 
\sin\theta_L^{t+1} &= \frac{|| (L_{t+1} R_t^\top - X_\star)^\top[I- L_{t+1}(L_{t+1}^\top L_{t+1})^{\dagger} L_{t+1}^\top ]||_F}{||L_{t+1} R_t^\top - X_\star||_F}.
\end{aligned}
\notag
\end{equation}

Below, we provide upper bounds for $\sin\theta_R^t$ and $\sin\theta_L^{t+1}$, based on the initialization conditions.

\begin{lemma}
 Suppose that we have $|| L_0R_0^\top-L_\star R_\star^\top ||_F \le \rho \sigma_{r_\star}(L_\star R_\star^\top)$ with $\rho \le \frac{1}{2}$, then we have
$$
\sin\theta_R^t \le  \frac{\sqrt{2}\rho}{\sqrt{1-\rho^2}},\ \ \  \sin\theta_L^t \le  \frac{\sqrt{2}\rho}{\sqrt{1-\rho^2}}. 
$$

\label{lemma:theta}
\end{lemma}
\begin{IEEEproof}

Define  matrix $F = \left[\begin{matrix}
    L\\
    R
\end{matrix}\right]\in\mathbb{R}^{(n_1+n_2)\times r}, \ L\in\mathbb{R}^{n_1\times r},\ R\in\mathbb{R}^{n_2\times r}$, $X=LR^\top$, $F_\star=\left[\begin{matrix}
    L_\star\\
    R_\star
\end{matrix}\right]\in\mathbb{R}^{(n_1+n_2)\times r_\star},\ X_\star = U_\star\Sigma_\star V^\top_\star, \ L_\star = U_\star\Sigma_\star^{1/2}\in\mathbb{R}^{n_1\times r_\star},\ R_\star=V_\star\Sigma_\star^{1/2}\in\mathbb{R}^{n_2\times r_\star}.$

The proof of this lemma is based on the result of Lemma 13 from \cite{zhang2021preconditioned}. We first present Lemma 13 from \cite{zhang2021preconditioned}.
\begin{lemma}[Lemma 13 in \cite{zhang2021preconditioned}]
Suppose that $|| FF^\top - F_\star F_\star^\top ||_F \le \rho \sigma_{r_\star} (F_\star ^\top F_\star)$ with $\rho\le 1/\sqrt{2}$, then we have
$$
\frac{|| X_\star ||_F}{|| FF^\top - F_\star F_\star^\top ||_F} \le \frac{\rho}{ \sqrt{2} \sqrt{1-\rho^2}}.
$$ 
\label{lemma:13}
\end{lemma}

First, we prove that the initialization condition in Lemma \ref{lemma:13} is satisfied.
For $FF^\top-F_\star F_\star^\top$, we have
\begin{equation}
\begin{aligned}
|| FF^\top - F_\star F_\star^\top ||_F & \overset{(a)}{\le}  2||LR^\top-L_\star R_\star^\top ||_F \\
&\overset{(b)}{\le } 2\rho\sigma_{r_\star}(X_\star) \overset{(c)}{= } \sigma_{r_\star}(F_\star^\top F_\star),
\end{aligned}
\notag
\end{equation}
where $(a)$ follows from the result of Lemma 24 in \cite{tong2021accelerating}; $(b)$ uses the initialization assumption $||LR^\top-L_\star R_\star^\top ||_F \le  \rho\sigma_{r_\star}(X_\star)$; $(c)$ uses the fact that 
$$
\sigma_{r_\star}(F_\star ^\top F_\star) = \sigma_{r_\star}(L_\star^\top L_\star +R_\star^\top R_\star) = 2\sigma_{r_\star}(\Sigma_\star)=2\sigma_{r_\star}(X_\star).
$$

Next, we use the result of Lemma \ref{lemma:13} to prove Lemma \ref{lemma:theta}.

For $|| (L_t R_t^\top - X_\star) [I - R_t(R_t^\top R_t)^{\dagger}R_t^\top] ||_F$ in $\sin\theta_R^t$, we have
\begin{equation}
\begin{aligned}
&|| (L_t R_t^\top - X_\star) [I - R_t(R_t^\top R_t)^{\dagger}R_t^\top] ||_F \\
& = || X_\star [I - R_t(R_t^\top R_t)^{\dagger}R_t^\top] ||_F \\
&\le ||X_\star||_F ||I - R_t(R_t^\top R_t)^{\dagger}R_t^\top|| \le ||X_\star||_F.
\end{aligned}
\label{equ:25}
\end{equation}
For $||L_t R_t^\top - X_\star||_F$ in $\sin\theta_R^t$, we have
\begin{equation}
||LR^\top-L_\star R_\star^\top ||_F \ge  \frac{1}{2}|| FF^\top - F_\star F_\star^\top ||_F,
\label{equ:26}
\end{equation}
where this inequality follows from the result of Lemma 24 in \cite{tong2021accelerating}.
Therefore, combining equations (\ref{equ:25}) and (\ref{equ:26}), we have
\begin{equation}
\begin{aligned}
&\sin\theta_R^t = \frac{|| (L_t R_t^\top - X_\star) [I - R_t(R_t^\top R_t)^{\dagger}R_t^\top] ||_F}{||L_t R_t^\top - X_\star||_F} \\
&\le \frac{2||X_\star||_F}{||F_tF_t^\top - F_\star F_\star^\top||_F}\le \frac{\sqrt{2} \rho}{  \sqrt{1-\rho^2}}.
\end{aligned}
\notag
\end{equation}
Similarly, for $\sin\theta_L^{t}$, we have
\begin{equation}
\sin\theta_L^t \le  \frac{\sqrt{2}\rho}{\sqrt{1-\rho^2}}.
\notag \end{equation}
Thereby, we complete the proof of Lemma \ref{lemma:theta}.

\end{IEEEproof}

Based on the upper bounds of $\sin\theta_R^t$ and $\sin\theta_L^{t+1}$, we have
$$
\cos\theta_R^t \ge \sqrt{\frac{1-3\rho^2}{1-\rho^2}},\ \ \cos\theta_L^{t+1} \ge \sqrt{\frac{1-3\rho^2}{1-\rho^2}}. 
$$
Therefore, we have
\begin{equation}
\begin{aligned}
&\|\nabla_L f_c (L_t,R_t)\|_{P_{R_t}^*}^2 \\
&\ \ \ \ \ge  (\cos\theta_R^t - \sqrt{r+r_\star} \delta_{2r+1} )^2  || (L_tR_t^\top-X_\star) ||_F^2 \\
&\ \ \ \ \ge \left( \sqrt{\frac{1-3\rho^2}{1-\rho^2}} -\sqrt{r+r_\star} \delta_{2r+1} \right)^2 || (L_tR_t^\top-X_\star) ||_F^2 \\
&\ \ \ \ \overset{(a)}{\ge} \underbrace{\left( \sqrt{\frac{1-3\rho^2}{1-\rho^2}} -\sqrt{r+r_\star} \delta_{2r+1} \right)^2}_{\tau} f_c(L_t,R_t),\\
&\|\nabla_R f_c (L_{t+1},R_t)\|_{P_{L_{t+1}}^*}^2 \\
& \ \ \ \ \ge \underbrace{\left( \sqrt{\frac{1-3\rho^2}{1-\rho^2}} -\sqrt{r+r_\star} \delta_{2r+1} \right)^2}_{\tau} f_c(L_{t+1},R_t),
\end{aligned}
\notag \end{equation}
where $(a)$ uses the $\delta_{2r+1}$-RIP condition
$$
||L_tR_t^\top-X_\star||_F^2\ge  \frac{1}{1+\delta_{2r+1}} ||\A(L_tR_t^\top-X_\star)||_2^2 \ge f_c(L_{t+1},R_t).
$$
Thereby, we complete the proof of Lemma \ref{lemma: gradient dominance}.
\end{IEEEproof}


\subsection{Proof of Theorem \ref{main theorem}}
\label{proof of the main results}    
\begin{IEEEproof}
Assuming that the assumptions in Theorem \ref{main theorem} hold, we can conclude that the assumptions in Lemmas \ref{lemma:lipschitz} and \ref{lemma: gradient dominance} also hold by the result of Lemma \ref{lemma:4}.

We then classify $ \| \nabla_L f_c(L_t,R_t) \|_{P_{R_t}^*},\ \|\nabla_R f_c(L_{t+1},R_t)\|_{P_{L_{t+1}}^*} $ into four cases as follows:
\begin{itemize}
    \item (a): $ \| \nabla_L f_c(L_t,R_t) \|_{P_{R_t}^*} >  3\| \mathcal{A}^*(s) R_t \|_{P_{R_t}^*}$, and $\|\nabla_R f_c(L_{t+1},R_t)\|_{P_{L_{t+1}}^*} > 3\|\mathcal{A}^*(s)L_{t+1}^\top \|_{P_{L_{t+1}}^*}$
    \item (b): $ \| \nabla_L f_c(L_t,R_t) \|_{P_{R_t}^*} >  3\| \mathcal{A}^*(s) R_t \|_{P_{R_t}^*}$, and $\|\nabla_R f_c(L_{t+1},R_t)\|_{P_{L_{t+1}}^*} \le 3\|\mathcal{A}^*(s)L_{t+1}^\top \|_{P_{L_{t+1}}^*}$
    \item (c): $ \| \nabla_L f_c(L_t,R_t) \|_{P_{R_t}^*} \le  3\| \mathcal{A}^*(s) R_t \|_{P_{R_t}^*}$, and $\|\nabla_R f_c(L_{t+1},R_t)\|_{P_{L_{t+1}}^*} > 3\|\mathcal{A}^*(s)L_{t+1}^\top \|_{P_{L_{t+1}}^*}$
    \item (d): $ \| \nabla_L f_c(L_t,R_t) \|_{P_{R_t}^*} \le  3\| \mathcal{A}^*(s) R_t \|_{P_{R_t}^*}$, and $\|\nabla_R f_c(L_{t+1},R_t)\|_{P_{L_{t+1}}^*} \le 3\|\mathcal{A}^*(s)L_{t+1}^\top \|_{P_{L_{t+1}}^*}$
\end{itemize}

\textbf{Analysis of case (a)}
For case (a), we directly apply the results from Lemma \ref{lemma: gradient dominance}, and obtain 
\begin{equation}
\begin{aligned}
& f_c (L_{t+1},R_{t+1}) \le (1-\eta_c)^2 f_c(L_t,R_t), \\
&\|L_{t+1}R_{t+1}^\top - X_\star\|_F^2 \le \frac{1+\delta_{2r+1}}{1-\delta_{2r+1}} (1-\eta_c)^2 \| 
L_tR_t^\top - X_\star \|_F^2,\\  
\end{aligned}
\notag
\end{equation}
where $\eta_c = \tau\left ( \eta - \frac{\eta}{3}(1 + 2\eta(1+\delta_{2r+1}) ) \right)$.

\textbf{Analysis of case (b)}
For case (b), we have
\begin{equation}
f_c(L_{t+1},R_t) \overset{(i)}{\le} \frac{1}{\tau} \|\nabla_R f_c(L_{t+1},R_t)\|_{P_{L_{t+1}}^*}  \le \frac{3}{\tau} \|\mathcal{A}^*(s)L_{t+1}^\top \|_{P_{L_{t+1}}^*}
\notag
\end{equation}
where $(i)$ uses the result form Lemma \ref{lemma: gradient dominance}, i.e., $\|\nabla_R f_c (L_{t+1},R_t)\|_{P_{L_{t+1}}^*}^2 \ge \tau f_c (L_{t+1},R_t)$.
Then for $f_c(L_{t+1},R_{t+1})$, we have
\begin{equation}
\begin{aligned}
&f_c(L_{t+1},R_{t+1})  \le f_c(L_{t+1},R_t) -\eta\|\nabla_R f_c(L_{t+1},R_t)\|^2_{P_{L_{t+1}}^* } \\
&  + \frac{\eta^2(1+\delta_{2r+1})}{2} \| \nabla_R f_c(L_{t+1},R_t) \|_{P_{L_{t+1}}^*}^2 \\
& + \frac{\eta^2(1+\delta_{2r+1})}{2} \| \A^*(s) L_{t+1}^\top \|_{P_{L_{t+1}}^*}^2 \\
&  + \eta\|\nabla_R f_c(L_{t+1},R_t)\|_{P_{L_{t+1}}^* } \|\A^*(s)L_{t+1}^\top\|_{P_{L_{t+1}}^* } \\
& \overset{(i)}{\le} f_c(L_{t+1},R_t) + 3\eta  \|\A^*(s) L_{t+1}^\top \|_{P_{L_{t+1}}^*}^2 \\
&+ 2{\eta^2(1+\delta_{2r+1})} \|\A^*(s) L_{t+1}^\top \|_{P_{L_{t+1}}^*}^2 \\
&\le \frac{1}{\tau} \|\A^*(s) L_{t+1}^\top \|_{P_{L_{t+1}}^*}^2  + 3\eta  \|\A^*(s) L_{t+1}^\top \|_{P_{L_{t+1}}^*}^2 \\  
& + 2{\eta^2(1+\delta_{2r+1})} \|\A^*(s) L_{t+1}^\top \|_{P_{L_{t+1}}^*}^2\\
&\overset{(ii)}{<} \left(\frac{1}{\tau} +7\right)\|\A^*(s) L_{t+1}^\top\|_{P_{L_{t+1}}^*}^2
\end{aligned}
\notag
\end{equation}
where $(i)$ uses the assumption that $\|\nabla_R f_c(L_{t+1},R_t)\|_{P_{L_{t+1}}^*} \le 3\|\mathcal{A}^*(s) L_{t+1}^\top\|_{P_{L_{t+1}}^*}$; (ii) uses the fact that $\delta_{2r+1}<1$ and $\eta<1$.

\par

\textbf{Analysis of case (c)}
For case (c), we have 
\begin{equation}
f_c(L_{t},R_t) \overset{(i)}{\le} \frac{1}{\tau} \|\nabla_L f_c(L_{t},R_t)\|_{P_{R_{t}}^*}  \le \frac{3}{\tau} \|\mathcal{A}^*(s)R_t\|_{P_{R_{t}}^*},
\notag
\end{equation}
where (i) uses the result from Lemma \ref{lemma: gradient dominance}, i.e., $\|\nabla_L f_c (L_{t},R_t)\|_{P_{R_{t}}^*}^2 \ge \tau f_c (L_{t},R_t)$.
For $f_c(L_{t+1},R_t)$, we have
\begin{equation}
\begin{aligned}
& f_c(L_{t+1},R_t) \le f_c(L_t,R_t) -\eta\|\nabla_L f_c(L_{t},R_t)\|^2_{P_{R_{t}}^* }\\
&\ \ \ \ +\frac{\eta^2(1+\delta_{2r+1})}{2} \left( \| \nabla_L f_c(L_{t},R_t) \|_{P_{R_{t}}^*}^2 + \| \A^*(s) R_{t} \|_{P_{R_{t}}^*}^2  \right) \\
&\ \ \ \   + \eta\|\nabla_L f_c(L_{t},R_t)\|_{P_{R_{t}}^* } \|\A^*(s)R_{t}\|_{P_{R_{t}}^* } \\
& \overset{(i)}{\le} f_c(L_{t},R_t) +2\eta^2(1+\delta_{2r+1}) \|\A^*(s) R_{t} \|_{P_{R_{t}}^*}^2 \\
&\ \ \ \  + 3\eta  \|\A^*(s) R_{t} \|_{P_{R_{t}}^*}^2 \\
&\le \frac{1}{\tau} \|\A^*(s) R_{t} \|_{P_{R_{t}}^*}^2 + 2\eta^2(1+\delta_{2r+1}) \|\A^*(s) R_{t} \|_{P_{R_{t}}^*}^2 \\
&\ \ \ \ + 3\eta  \|\A^*(s) R_{t} \|_{P_{R_{t}}^*}^2 \\  
&\overset{(ii)}{<} \left(\frac{1}{\tau} +7\right)\|\A^*(s) R_{t} \|_{P_{R_{t}}^*}^2,
\end{aligned}
\label{equ:18}
\end{equation}
where $(i)$ uses the assumption that $\|\nabla_L f_c(L_{t},R_t)\|_{P_{R_{t}}^*} \le 3\|\mathcal{A}^*(s) R_t \|_{P_{R_{t}}^*}$; (ii) uses the fact that $\delta_{2r+1}<1$ and $\eta<1$.

And then we have 
\begin{equation}
f_c(L_{t+1},R_{t+1}) \le (1-\eta_c)f_c(L_{t+1},R_t).
\label{equ:19}
\end{equation}

since $\|\nabla_R f_c(L_{t+1},R_t)\|_{P_{L_{t+1}}^*} > 3\|\mathcal{A}^*(s) L_{t+1}^\top \|_{P_{L_{t+1}}^*}$.

Combining equations (\ref{equ:18}) and (\ref{equ:19}), we have
\begin{equation}
    f_c(L_{t+1},R_{t+1}) < \left(\frac{1}{\tau} +7\right)\|\A^*(s) R_{t} \|_{P_{R_{t}}^*}^2.
\notag \end{equation}

\textbf{Analysis of case (d)} The analysis of case (d) is actually the same as case (b), and then we have
\begin{equation}
  f_c(L_{t+1},R_{t+1}) \le   \left(\frac{1}{\tau} +7\right)\|\A^*(s) L_{t+1}^\top\|_{P_{L_{t+1}}^*}^2.
\notag \end{equation}

Therefore, combining the analysis of the four case, we have
$$
f_c (L_{t+1},R_{t+1}) \le (1-\eta_c)^2 f_c(L_t,R_t)
$$ for any $t$ where $ \| \nabla_L f_c(L_t,R_t) \|_{P_{R_t}^*} >  3\| \mathcal{A}^*(s) \|_{P_{R_t}^*}$, and $\|\nabla_R f_c(L_{t+1},R_t)\|_{P_{L_{t+1}}^*} > 3\|\mathcal{A}^*(s) \|_{P_{L_{t+1}}^*}$. Otherwise, we have
\begin{equation}
\begin{aligned}
&f_c(L_{t+1},R_{t+1}) \\
&\ \ \ \  \le   \left(\frac{1}{\tau} +7\right)\max\{\|\A^*(s) L_{t+1}^\top\|_{P_{L_{t+1}}^*}^2,\ \|\A^*(s) R_{t} \|_{P_{R_{t}}^*}^2 \}\\
&\ \ \ \  \overset{(i)}{\le} C_3 \mathcal{E}_{opt},
\end{aligned}
\notag
\end{equation}
where $(i)$ uses the result of Lemma \ref{lemma:4} and $C_3=\frac{1}{\tau} +7$.
This implies that when the gradient is large, the recovery error converges linearly, whereas when the gradient is small, the recovery error is already close to optimal.

\end{IEEEproof}

\subsection{Proof of Theorem \ref{theorem:general}}
\label{proof of the general case}
For general low-rank matrix estimation problems, our analysis follows a similar approach to that used for low-rank matrix recovery. Specifically, if the loss function $g$ satisfies restricted smoothness and restricted strong convexity, then we can establish that 
\begin{equation}
\frac{\mu}{2}||X-X_\star||_F^2 \le g(X)-g(X_\star) \le \frac{L_g}{2}||X-X_\star||_F^2.
\label{equ:37}
\end{equation}

We then construct a Lipschitz-like inequality similar to Lemma \ref{lemma:lipschitz}.
\begin{lemma}
For the general low-rank matrix estimation, suppose that the loss function $g$ satisfies the rank-$2r$ restricted $L$-smooth and restricted $\mu$-strongly convex, then we have
\begin{equation}
\begin{aligned}
& g(L_{t+1}R_t^\top) \le g(L_tR_t^\top) - \eta(1-\frac{L_g\eta}{2}) ||\nabla g(L_tR_t^\top)R_t  ||_{P_{R_t}}^2,\\
& g(L_{t+1}R_{t+1}^\top) \le g(L_{t+1}R_t^\top)\\
&\ \ \ \ \ \ \ \ \ \ \ \ \ \ \ \ \ \ \ \ \ - \eta(1-\frac{L_g\eta}{2}) ||\nabla g(L_{t+1}R_t^\top)^\top L_{t+1} ||_{P_{L_{t+1}}}^2.
\end{aligned}
\notag
\end{equation}
\label{lemma:lipschitz-general}
\end{lemma}
\begin{IEEEproof}
Based on the $L_g$-smooth and the update rule of APGD, we have
\begin{equation}
\begin{aligned}
   & g(L_{t+1}R_t^\top) \le g(L_tR_t^\top)  + \frac{L_g}{2} ||L_{t+1}R_t^\top-L_tR_t^\top||_F^2\\
   &+ \langle \nabla g(L_tR_t^\top), L_{t+1}R_t^\top-L_tR_t^\top\rangle\\
   & =g(L_tR_t^\top) + \frac{L_g\eta^2}{2} ||\nabla g(L_tR_t^\top)R_t (R_t^\top R_t)^\dagger R_t^\top ||_F^2 \\
   &- \eta \langle \nabla g(L_tR_t^\top), \nabla g(L_tR_t^\top)R_t (R_t^\top R_t)^\dagger R_t^\top \rangle\\
   &\le g(L_tR_t^\top) + \frac{L_g\eta^2}{2} ||\nabla g(L_tR_t^\top)R_t ||_{P_{R_t}}^2 \\
   &-\eta \langle \nabla g(L_tR_t^\top)R_t (R_t^\top R_t)^{\dagger/2}, \nabla g(L_tR_t^\top)R_t (R_t^\top R_t)^{\dagger/2} \rangle\\
   &\le g(L_tR_t^\top) - \eta(1-\frac{L_g\eta}{2}) ||\nabla g(L_tR_t^\top)R_t ||_{P_{R_t}}^2.
\end{aligned}
\notag
\end{equation}
 Similarly, we have 
 \begin{equation}
\begin{aligned}
    & g(L_{t+1}R_{t+1}^\top) \le g(L_{t+1}R_t^\top) \\
    &\qquad \qquad \ \ \ \ \ - \eta(1-\frac{L_g\eta}{2}) ||\nabla g(L_{t+1}R_t^\top)^\top L_{t+1}||_{P_{L_{t+1}}}^2.
\end{aligned}
 \notag \end{equation}
\end{IEEEproof}

therefore, we complete the proof of Lemma \ref{lemma:lipschitz-general}.

Next, we derive lower bounds for $||\nabla g(L_tR_t^\top)R_t ||_{P_{R_t}}^2$ and $||\nabla g(L_{t+1}R_t^\top)^\top L_{t+1}  ||_{P_{L_{t+1}}}^2$ separately.

\begin{lemma}
Suppose that the loss function $g$ satisfies the rank-$2r$ restricted $L$-smooth and restricted $\mu$-strongly convex, and the initial point $X_0$ satisfies $||X_0-X_\star||_F \le \rho \sigma_{r_\star},\ \rho \le \frac{1}{2}$, then we have
\begin{equation}
\begin{aligned}
   &||\nabla g(L_tR_t^\top)R_t ||_{P_{R_t}}^2 \ge \zeta [g(L_tR_t^\top)- g(X_\star)]\\
&||\nabla g(L_{t+1}R_t^\top)L_{t+1} ||_{P_{L_{t+1}}}^2 \ge \zeta [g(L_{t+1}R_t^\top) - g(X_\star)],
\end{aligned}
\notag
\end{equation}
where $\zeta = \frac{(C_\rho-1)L +(C_\rho +1)\mu }{\sqrt{2L}}.$
\label{lemma:12}
\end{lemma}
\begin{IEEEproof}

The proof of this lemma begins by applying Lemma 15 from \cite{zhang2023preconditioned}.

\begin{lemma}(Lemma 15 in \cite{zhang2023preconditioned})
Suppose that the loss function $g$ satisfies the rank-$r$ restricted $L_g$-smooth and restricted $\mu$-strongly convex, then we have
\begin{equation}
\begin{aligned}
&\left |  \frac{2}{\mu+L_g} \langle \nabla ^2 g(X)[E] , F \rangle - \langle E, F\rangle \right| \le \frac{L_g-\mu}{L_g+\mu} ||E||_F ||F||_F
\end{aligned}
\notag \end{equation}
for all $\operatorname{rank}(M)\le r $ and $\operatorname{rank}(E+F)\le r$, where $\nabla ^2 g(X)[E]=\lim _{t\to 0} \frac{1}{t} [\nabla g(X+tE)-\nabla g(X)]$.
\label{lemma:14}
\end{lemma}

\begin{equation}
\begin{aligned}
&||\nabla g(L_tR_t^\top)R_t (R_t^\top R_t)^{\dagger/2} ||_F \\
&=\underset{||Y||_F=1}{\max}\langle  \nabla g(L_tR_t^\top)R_t (R_t^\top R_t)^{\dagger/2}, Y \rangle\\
&=\underset{||Y||_F=1}{\max}\langle \nabla g(L_tR_t^\top) , Y(R_t^\top R_t)^{\dagger/2}R_t^\top \rangle \\ 
&= \underset{||Y||_F=1}{\max}\langle \nabla g(L_tR_t^\top) - \nabla g(X_\star), Y(R_t^\top R_t)^{\dagger/2}R_t^\top \rangle\\
&\overset{(a)}{=} \underset{||Y||_F=1}{\max} \int_0^1  \langle \nabla^2 g(X_\star + tE_t)[E_t],Y(R_t^\top R_t)^{\dagger/2}R_t^\top \rangle dt \\
&\overset{(b)}{\ge} \underset{||Y||_F=1}{\max} \frac{L_g+\mu}{2} \left[ \langle E_t , Y(R_t^\top R_t)^{\dagger/2}R_t^\top \rangle - \frac{L_g-\mu}{L_g+\mu} ||E_t||_F \right] \\
&{\ge}  \underset{||Y||_F=1}{\max}  \frac{L_g+\mu}{2} \langle E_t , Y(R_t^\top R_t)^{\dagger/2}R_t^\top \rangle - \frac{L_g-\mu}{2} ||E_t||_F,
\end{aligned}
\label{equ:41}
\end{equation}
where $E_t = L_tR_t^\top - X_\star$, and $(a)$ uses the definition of $\nabla ^2 g(X)[E]$; $(b)$ uses the result of Lemma \ref{lemma:14}. Similarly, we have
\begin{equation}
\begin{aligned}
&||\nabla g(L_{t+1}R_t^\top)^\top L_{t+1} (L_{t+1}^\top L_{t+1})^{\dagger/2} ||_F\\
&\ \ \ \ \ge \underset{||Y||_F=1}{\max}  \frac{L_g+\mu}{2} \langle E_{t+\frac{1}{2}}, Y(L_{t+1}^\top L_{t+1})^{\dagger/2}L_{t+1}^\top \rangle\\
&\ \ \ \  - \frac{L_g-\mu}{2} ||E_{t+\frac{1}{2}}||_F,
\end{aligned}
\label{equ:42}
\end{equation}
where $E_{t+\frac{1}{2}}$ denotes $L_{t+1}R_t^\top-X_\star.$ 

Then we need to bound $\underset{||Y||_F=1}{\max}  \frac{L_g+\mu}{2} \langle E_t , Y(R_t^\top R_t)^{\dagger/2}R_t^\top \rangle$， while $\underset{||Y||_F=1}{\max}  \frac{L_g+\mu}{2} \langle E_{t+\frac{1}{2}}, Y(L_{t+1}^\top L_{t+1})^{\dagger/2}L_{t+1}^\top \rangle$ can be bounded in a similar way. Note that the angle between the column space of $E_t^\top$ and that of $R_t$ is $$\cos\theta_R^t = \frac{||E_t R_t(R_t^\top R_t)^{\dagger/2}||_F}{||E_t||_F}= \underset{||Y||_F=1}{\max} \frac{\langle E_tR_t(R_t^\top R_t)^{\dagger/2} , Y \rangle}{||E_t||_F}.$$ If we can lower bound $\cos\theta$ , then we have the lower bound of $\underset{||Y||_F=1}{\max}  \frac{L_g+\mu}{2} \langle E_t , Y(R_t^\top R_t)^{\dagger/2}R_t^\top \rangle$.

Therefore, using the result from Lemma \ref{lemma:theta}, we obtain
$$
\cos \theta_R^t \ge \sqrt{\frac{1-3\rho^2}{1-\rho^2}},\ \ \cos\theta_L^{t+1} \ge \sqrt{\frac{1-3\rho^2}{1-\rho^2}}.  
$$
Furthermore, we can derive
\begin{equation}
\begin{aligned}
\underset{||Y||_F=1}{\max}  \frac{L_g+\mu}{2} \langle E_t , Y(R_t^\top R_t)^{\dagger/2}R_t^\top \rangle &\ge \frac{(L_g+\mu)C_\rho}{2} ||E_t||_F\\
\underset{||Y||_F=1}{\max}  \frac{L_g+\mu}{2} \langle E_{t+\frac{1}{2}} , Y(R_t^\top R_t)^{\dagger/2}R_t^\top \rangle &\ge \frac{(L_g+\mu)C_\rho}{2} ||E_{t+\frac{1}{2}}||_F,
\end{aligned}
\label{equ:43}
\end{equation}
where $C_\rho = \sqrt{\frac{1-3\rho^2}{1-\rho^2}}.$

Combining the results of equation (\ref{equ:41}), (\ref{equ:42}) and (\ref{equ:43}), we have
 \begin{equation}
    \begin{aligned}
& ||\nabla g(L_tR_t^\top)R_t (R_t^\top R_t)^{\dagger/2} ||_F \\
&\ \ \ \ \ \ \ \ \ \   \ge \frac{(C_\rho-1)L_g +(C_\rho +1)\mu }{2} ||E_t||_F\\
&\ \ \ \ \ \ \ \ \ \  \overset{(a)}{\ge}     \underbrace{\frac{(C_\rho-1)L_g +(C_\rho +1)\mu }{\sqrt{2L_g}} }_{\zeta}\left(g(L_{t}R_t^\top)-g(X_\star)\right)^{\frac{1}{2}} ;\\
&||\nabla g(L_{t+1}R_t^\top)^\top L_{t+1} (L_{t+1}^\top L_{t+1})^{\dagger/2} ||_F \\
&\ \ \ \ \ \ \ \ \ \  \ge \frac{(C_\rho-1)L_g +(C_\rho +1)\mu }{2} ||E_{t+\frac{1}{2}}||_F\\
&\ \ \ \ \ \ \ \ \ \  \overset{(a)}{\ge}  \underbrace{\frac{(C_\rho-1)L_g +(C_\rho +1)\mu }{\sqrt{2L_g}} }_{\zeta}\left(g(L_{t+1}R_t^\top)-g(X_\star)\right)^{\frac{1}{2}},
\end{aligned}
\notag \end{equation}
 where $(a)$ uses the result of equation (\ref{equ:37}).
Therefore, we complete the proof of Lemma \ref{lemma:12}
\end{IEEEproof}

By combining the results of Lemma \ref{lemma:lipschitz-general} and Lemma \ref{lemma:12}, we obtain 
\begin{equation}
\begin{aligned}
& g(L_{t+1}R_t^\top) - g(X_\star) \\
& \le g(L_tR_t^\top) - g(X_\star) - \eta(1-\frac{L_g\eta}{2}) ||\nabla g(L_tR_t^\top)R_t (R_t^\top R_t)^{\dagger/2} ||_F^2\\
&\le g(L_tR_t^\top) - g(X_\star) - \eta(1-\frac{L_g\eta}{2}) \zeta^2 \left(g(L_{t}R_t^\top)-g(X_\star)\right)   \\
&\le  \left(1-\eta(1-\frac{L_g\eta}{2}) \zeta^2\right)  \left(g(L_{t}R_t^\top)-g(X_\star)\right),\\
& g(L_{t+1}R_{t+1}^\top) -g(X_\star) \le  \left(1-\eta(1-\frac{L_g\eta}{2}) \zeta^2\right) \left(g(L_{t+1}R_{t}^\top) -g(X_\star)\right),
\end{aligned}
\notag
\notag \end{equation}
which leads to
$$
g(L_{t+1}R_{t+1}^\top) -g(X_\star) \le \left(1-\eta(1-\frac{L_g\eta}{2}) \zeta^2\right)^2 \left(g(L_{t}R_t^\top)-g(X_\star)\right).
$$
Therefore, we complete the proof of Theorem \ref{theorem:general}.
\end{CJK}
\end{document}